\documentclass{article}

\PassOptionsToPackage{numbers, compress}{natbib}
\usepackage[dvipsnames]{xcolor}

\definecolor{myblue}{RGB}{31, 119, 180}
\definecolor{myorange}{RGB}{255, 127, 14}
\definecolor{mygreen}{RGB}{44, 160, 44}
\definecolor{myred}{RGB}{214, 39, 40}
\definecolor{myyellow}{RGB}{230,194,0}

\usepackage[final]{neurips_2024}

\usepackage[utf8]{inputenc} %
\usepackage[T1]{fontenc}    %
\usepackage[allcolors=NavyBlue,colorlinks=true,backref=page]{hyperref}       %
\usepackage{url}            %
\usepackage{booktabs}       %
\usepackage{amsfonts}       %
\usepackage{nicefrac}       %
\usepackage{microtype}      %
\usepackage{xspace}
\usepackage{subcaption}

\usepackage{wrapfig}

\usepackage[noend]{algpseudocode}
\usepackage{algorithm}

\usepackage{cite}
\usepackage{comment}
\usepackage{prettyref}
\usepackage{amsfonts}
\usepackage{makecell}
\usepackage{adjustbox}
\usepackage{multicol}

\usepackage{amssymb}
\usepackage{mathtools}
\usepackage{amsthm}
\usepackage{amsmath}

\usepackage[textsize=footnotesize]{todonotes}
\usepackage{xspace}

\definecolor{colorcomment}{RGB}{160, 190, 210}%
\makeatletter
\algnewcommand{\LineComment}[1]{\Statex \hskip\ALG@thistlm \(\triangleright\) 
{\color{colorcomment}#1}}
\makeatother

\makeatletter
\algnewcommand{\IndentLineComment}[1]{\Statex \hskip\ALG@tlm \(\triangleright\) {\color{colorcomment}#1}}
\makeatother

\newcommand\policy{\ensuremath{\pi}}

\newcommand\state{s}

\newcommand\stateDist{d}

\newcommand\transDynamics{\mathcal{P}}

\newcommand\RFunc{R}

\newcommand\corpus{\mathcal{C}}

\newcommand\gradient{{g}}

\newcommand\VFunc{\ensuremath{\ensuremath{V}}}

\newcommand\QFunc{\ensuremath{\ensuremath{Q}}}

\newcommand\AFunc{\mathbi{A}}

\def\mathbi#1{\textbf{\em #1}}

\newcommand\stateSpace{\ensuremath{\mathcal{S}}}
\newcommand\action{\ensuremath{a}}

\newcommand\actionSpace{\ensuremath{\mathcal{A}}}

\newcommand\horizon{\ensuremath{T}}
\newcommand\ENum{\ensuremath{N}} %

\newcommand\critic{\ensuremath{{C}}\xspace}

\newcommand\MDP{\ensuremath{\mathcal{M}}}

\newcommand{\expct}[1]{\mathbb{E}\left[#1\right]}
\newcommand{\expctover}[2]{\mathbb{E}_{#1}\!\left[#2\right]}
\newcommand\RationaleBuffer{\ensuremath{\mathcal{B}}}

\newcommand{\TOPK}{\textsc{Top-K}\xspace}
\newcommand{\TOPP}{\textsc{Top-P}\xspace}
\newcommand{\TOPPK}{\textsc{Top-PK}\xspace}
\newcommand{\hypothesis}{\mathcal{Y}}

\newcommand{\E}{\mathbb{E}}
\newcommand{\given}{{\,|\,}}
\newcommand{\AP}{\mathtt{AP}}

\def \argmax {\mathop{\rm arg\,max}}
\def \argmin {\mathop{\rm arg\,min}}

\newcommand\Gen{\ensuremath{\policy_{{\theta}}}}

\newcommand\Vocabulary{\ensuremath{\mathcal{V}}}

\newcommand\CNum{\ensuremath{\mathcal{K}}} %

\newcommand{\algname}{\textsc{SPO}\xspace}
\newcommand{\bestN}{\textsc{BON}\xspace}

\newcommand{\fwname}{\textsc{DrugImprover}\xspace} %

\newif\iffinal
\finaltrue

\iffinal
    \newcommand{\fix}[1]{#1}
    \newcommand{\note}[1]{}
    \newcommand{\pref}[1]{}
    \newcommand{\XL}[1]{}
    \newcommand{\YC}[1]{}
    \newcommand{\XLinline}[1]{}
    \newcommand{\YCinline}[1]{}
    \newcommand{\SJ}[1]{}
    
\else
    \newcommand{\fix}[1]{{\color{red} #1}}
    
    \newcommand{\YC}[1]{\todo[fancyline,color=NavyBlue!40]{YC: #1}\xspace}
    \newcommand{\YCinline}[1]{\textcolor{NavyBlue}{[YC: #1]}\xspace}
    \newcommand{\XL}[1]{\todo[fancyline,color=Maroon!40]{XL: #1}\xspace}
    \newcommand{\XLinline}[1]{\textcolor{Maroon}{[XL: #1]}}
    \newcommand{\SJ}[1]{\todo[fancyline,color=Blue!40]{SJ: #1}\xspace}
    
    \newcommand{\note}[1]{{\color{purple}[XL: #1]}}
    
    \newcommand{\pref}[1]{{\color{blue}(\ref{#1})}}
\fi

\newcommand{\tabref}[1]{Table~\ref{#1}}
\newcommand{\figref}[1]{Fig.~\ref{#1}}

\newcommand{\secref}[1]{\S\ref{#1}}
\newcommand{\appref}[1]{Appendix~\ref{#1}}

\newcommand{\lemref}[1]{Lemma~\ref{#1}}

\newcommand{\paren} [1] {\ensuremath{ \left( {#1} \right) }}

\newcommand{\bracket}[1]{\left[#1\right]}
\newcommand{\tuple}[1]{\ensuremath{\left\langle #1 \right\rangle}}
\newcommand{\curlybracket}[1]{\ensuremath{\left\{#1\right\}}}

\theoremstyle{plain}
\newtheorem{theorem}{Theorem}[section]

\newtheorem{lemma}[theorem]{Lemma}

\theoremstyle{definition}
\newtheorem{definition}[theorem]{Definition}

\theoremstyle{remark}

\newcommand{\BON}{{\mathtt{BON}}}

\title{DrugImproverLLM: A Large Language Model for Drug Optimization with Fine-Tuning via Advantage-Alignment Policy Optimization}

\author{%
}

\title{
DrugImproverGPT: A Large Language Model for Drug Optimization with Fine-Tuning via Structured Policy Optimization
}

\author{Xuefeng Liu\textsuperscript{1}\thanks{Correspondence to: Xuefeng Liu <\href{mailto:xuefeng@uchicago.edu}{xuefeng@uchicago.edu}>.} ,~\textbf{Songhao Jiang\textsuperscript{1}},~\textbf{Siyu Chen\textsuperscript{2}},~\textbf{Zhuoran Yang\textsuperscript{2}},~\textbf{Yuxin Chen\textsuperscript{1}}\\~\textbf{Ian  Foster\textsuperscript{1,3}},~\textbf{Rick  Stevens\textsuperscript{1,3}} \\
\textsuperscript{1}Department of Computer Science, University of Chicago\\
\textsuperscript{2}Department of Statistics and Data Science, Yale University \\
\textsuperscript{3}Argonne National Laboratory
}

\begin{document}

\maketitle

\begin{abstract}
Finetuning a Large Language Model (LLM) is crucial for generating results towards specific objectives.
This research delves into the realm of drug optimization and {introduce} a novel reinforcement learning algorithm to finetune a drug {optimization} {LLM-based generative} model, enhancing the original drug across 
target objectives, while {retains the beneficial chemical properties of the original drug.}
This work is comprised of two primary components: {
(1) \fwname: A framework tailored for improving robustness and efficiency in drug optimization. It includes a LLM designed for drug optimization and a novel Structured Policy Optimization (\algname) algorithm, which is theoretically grounded. This algorithm offers a unique perspective for fine-tuning the LLM-based generative model by aligning the improvement of the generated molecule with the input molecule under desired objectives.
} 
(2) A dataset of 1 million compounds, each with OEDOCK docking scores 
on 5 human proteins associated with cancer cells
and {24 binding sites from SARS-CoV-2 virus}.
We conduct a comprehensive evaluation of \algname 
and demonstrate its effectiveness in improving the original drug across target properties. 
Our code and dataset will be publicly available at: \url{https://github.com/xuefeng-cs/DrugImproverGPT}.

\end{abstract}

\section{Introduction}\label{sec:intro}

The cost of discovering a new drug through conventional approaches is estimated to range from hundreds of millions to billions of dollars~\citep{dickson2009cost}. 
This high cost is due to the lengthy and resource-intensive nature of the drug discovery and development process, which involves multiple stages, including target identification, lead compound identification, preclinical testing, and clinical trials. 
Despite significant efforts, the overall success rate in drug discovery is relatively low, with many drug candidates failing to progress beyond the early stages of development. 
Additionally, the time required to identify an effective drug can vary from several years to over a decade, depending on the complexity of the disease and the efficiency of the drug discovery process.
Such concerns are driving a growing trend towards drug repurposing~\citep{avram2023drugcentral}, which involves using FDA-approved drugs for different diseases instead of developing new drugs from the ground up. Yet despite some successes~\citep{pushpakom2019drug}, the effectiveness of drug repurposing has been limited since the drug is usually designed specifically for treating a particular disease. 
However, the emergence of rapidly evolving virus variants~\citep{hadj2022covid}, such as those associated with SARS-CoV-2~\citep{yuki2020covid}, as well as drug resistant cancer cells~\citep{MANS2017}, has sparked increased interest and  an urgent need to expedite the discovery of effective drugs. 

In this work, we 
propose
a reinforcement learning (RL)-based drug optimization {algorithm} to adapt existing drugs to fast-evolving virus variants and cancer cells, helping to address 
the aforementioned limitations of drug discovery and drug repurposing.  
RL has achieved superhuman performance in domains such as \fix{c}hess~\citep{lai2015giraffe}, video games~\citep{mnih2013playing}, and \fix{r}obotics~\citep{brunke2022safe}.
However, despite promising early results~\citep{born2021paccmannrl, guimaraes2017objective, jin2020multi, neil2018exploring, tan2022drlinker, zhang2023universal}, RL has yet to attain similar levels of performance for complex real-life problems like drug discovery. 

We identified four challenges that have thus far prevented RL from impacting drug design:
1) \textit{Search space complexity}\/: 
An RL algorithm for drug discovery needs to demonstrate both sample and computational efficiency, but the overwhelming complexity of the search space \citep{polya2012combinatorial} renders RL incapable of adequately exploring potential effective actions and states required for policy learning.
2) \textit{Sparse rewards}\/:
In contrast to the continuous reward environment found in popular environments like DeepMind Control Suite~\citep{tassa2018deepmind} or Meta-World~\citep{yu2020meta}, drug generation operates within a sparse reward environment where rewards are only obtainable upon {a complete molecule}.
3) \textit{Complex scoring criteria}\/: 
Generated molecules must fulfill multiple criteria, including solubility and synthesizability, 
while also achieving a high docking score when targeting a specific site.
4) {
\textit{Preservation of original beneficial properties}\/:
Lastly, as drugs with similar chemical structures should exhibit similar biological/chemical effects~\citep{bender2004molecular}, it is crucial to strike a balance between optimizing the drug and preserving the original drug's beneficial properties.
}

\textbf{Our contributions.} {We present \fwname, a {LLM-based} drug optimization framework designed to improve various properties of an original drug in a robust and efficient manner. Within this workflow, we introduce the \textbf{S}tructured \textbf{P}olicy \textbf{O}ptimization (\textbf{\algname}) algorithm to utilize the advantage preference {of properties improvement} to perform direct policy {optimization}.
\fwname and \algname effectively tackle the challenges outlined above in the following manner:}
\fix{
(1.)  \emph{Designing an LLM for Drug Optimization.} In this study, we develop a large language model (LLM) tailored specifically for drug optimization, incorporating a specialized corpus and custom loss function, among other features.}
{(2.) \emph{Sample complexity, sparsity, and computational efficiency.}
Because of the sparse reward nature of the drug design,
pure RL often finds it challenging to learn a good policy due to the complexity of the search space.
To reduce this complexity, \algname employs an imitation-learning-based approach to {pre-train a LLM-based} generator policy {with desirable}
behavior based on prior experience {of} {designing}
drug SMILES~\citep{weininger1988smiles} {strings}.
\algname also addresses the problem of reward sparsity by {utilizing the \TOPK and \TOPP Beam Search from LLM}
to obtain estimated rewards for intermediate steps. Finally, because calculating the docking score through virtual screening ({such as} OEDOCK~\citep{kelley2015posit}) is computationally costly~\citep{clyde2023ai}, \fwname adopts a transformer-based surrogate model to obtain docking scores more efficiently.
(3.) \emph{Property preserving.} 
To preserve the original drug's beneficial properties,
throughout the optimization process, it is crucial to balance the preservation of the original drug's beneficial properties with the optimization of other chemical attributes. To achieve this, we use Tanimoto similarity as a critic to {maximize} the Tanimoto similarity between the original and generated drugs.
(4.) \emph{Finetuning.}
Our proposed \algname algorithm leverages the advantageous preference of a generated drug over the original drug based on multiple objectives as the policy gradient signal. It performs direct policy improvement on an LLM-based generative model and addresses the sparse reward problem through partial molecule improvement.
}

In summary, our contributions are:

$\bullet$ We introduce the \fwname framework, which includes a ground-up designed LLM tailored for drug optimization, along with a novel RL finetuning algorithm, \algname, designed for drug optimization with theoretical analysis.

$\bullet$ By conducting comprehensive experiments
 by comparing to competing baselines
 {with existing SOTA}
on real world 
{viral} and {cancer target}
proteins, we {demonstrate that \algname outperform existing SOTA baseline} 
while consistently enhances existing molecules/drugs across multiple desired objectives, leading to improved drug candidates.

$\bullet$ We release a drug optimization dataset comprising 1 million ligands along with their OEDOCK scores to five proteins associated with cancer:
colony stimulating factor 1 receptor (CSF1R) kinase domain (PDB ID: 6T2W),
NOP2/Sun RNA methyltransferase 2 (NSUN2) (AlphaFold derived),
RNA terminal phosphate cyclase B (RTCB) ligase (PDB ID: 7P3B),
and Tet methylcytosine dioxygenase 1 (TET1) (AlphaFold derived),
and Wolf-Hirschhorn syndrome candidate 1 (WHSC1) (PDB ID: 7MDN) and 24 high-affinity binding sites on protein SARS-CoV-2: 3CLPro~(PDBID: 7BQY) virus. \fix{See more details in \secref{app:dataset}.}

\section{Related Work}\label{sec:related}

\subsection{Imitation learning}

Imitation learning (IL) is a technique where an agent learns by mimicking an expert's actions. IL outperforms pure RL by reducing the complexity and sparsity of the search space~\citep{liu2023active}. Offline IL methods, like behavioral cloning~\citep{pomerleau1988alvinn}, require a dataset of expert trajectories but can lead to errors in the learner's policy. In contrast, interactive IL methods, such as DAgger~\citep{ross2011reduction} and AggreVaTe~\citep{ross2014reinforcement}, use Roll-in-Roll-out (RIRO) scheduling, where learners initially follow their policy but switch to expert guidance for trajectory completion. However, these methods assume constant expert availability, which is impractical, and do not allow returning to previous states once a rollout begins. Our work integrates RIRO with {\TOPK and \TOPP sampling}~\citep{liu2024erp}, creating a guide policy be same as learner policy that conducts roll-outs on any state and estimates returns, improving upon traditional RIRO limitations.

\subsection{Reinforcement learning} %

One prominent approach in drug design employs RL \citep{tan2022reinforcement} to maximize an expected reward defined as the sum of predicted property scores as generated by property predictors. 
In terms of representation, existing works in RL for drug design have predominantly operated on SMILES string representations \citep{born2021paccmannrl, guimaraes2017objective, neil2018exploring, olivecrona2017molecular, popova2018deep,  staahl2019deep,tan2022drlinker,wang2022reinforcement, zhang2023universal, zhou2019optimization} or graph-based representations \citep{atance2022novo,gottipati2020learning, jin2020multi,wu2022rlcg,you2018graph}.
\fix{Traditional methods, such as genetic algorithms modified for molecular graphs~\citep{yoshikawa2018population} and Monte Carlo tree search applied to molecular graphs~\citep{jensen2019graph}, have been employed. These studies primarily concentrate on the De Novo drug discovery challenge instead of drug optimization. }
In our research, we have chosen to employ the SMILES representation.
However, previous studies have primarily focused on discovering new drugs, frequently overlooking molecular structure constraints during policy improvement. This oversight can lead to drastic changes in structure or functional groups, making most of the generated compounds unsynthesizable. In contrast, our work concentrates on optimizing existing drugs while preserving their beneficial properties, rather than creating entirely new ones from scratch.
\fix{MIMOSA and DrugEx v3 ~\citep{fu2021mimosa, liu2023drugex} represents the most recent approaches based on graph structures for drug optimization. However, it falls short of finetuning capability for drug optimization, an issue that our work has successfully addressed.}

\subsection{RL finetuning}
{Finetuning the generator model is critical to achieve drug improvement.
Prior RL finetuning methodologies aimed at aligning models with feedback from both humans (RLHF~\citep{bai2022training,christiano2017deep, ibarz2018reward,touvron2023llama}) and AI (RLAIF~\citep{bai2022constitutional,leike2018scalable}), which have recently found applications in the fine-tuning of language models for tasks like text summarization~\citep{ bohm2019better, stiennon2020learning, wu2021recursively, ziegler2019fine}, dialogue generation~\citep{hancock2019learning, jaques2019way,  yi2019towards}, 
and language assistance~\citep{bai2022training}. 
A core feature of RLHF and RLAIF lies in training a reward model {or make direct policy improvement} from the comparison feedback, {such as Rank Responses to align Human Feedback~(RRHF)~\citep{yuan2023rrhf}, Reward Ranked Finetuning~(RAFT)~\citep{dong2023raft}, Preference Ranking Optimization~(PRO)~\citep{song2023preference}, and Direct Preference Optimization~(DPO)~\citep{rafailov2023direct}.}}
Differing from previous works, our approach does not rely solely on feedback from a single human or AI model; instead, we engage multiple critics to evaluate the advantage preference of the generated vs.\ original drug based on comprehensive assessments, including factors like solubility. 
Moreover, we make direct policy improvement by using the advantage preference {in standard RL} {instead of binary feedback}.

\subsection{{LLMs for drug optimization}}

Large language models (LLMs) have been employed in molecule generation~\citep{bagal2021molgpt, rothchild2021c5t5, frey2023neural} and drug discovery~\citep{bran2023transformers, liu2024erp}. In contrast, our work focuses on drug optimization, which requires maintaining the original drug's beneficial structure and properties rather than designing from scratch.
A notable work in the drug optimization domain is REINVEN\fix{T} 4~\citep{he2021molecular, he2022transformer, loeffler2024reinvent}, which has developed transformer-based generative models with a strong focus on pretraining. However pretraining facilitates the generation of molecules similar to those in the training dataset, it also inherently limits the scope of exploration due to biases present in the training data. {Furthermore, REINVENT 4 categorizes the features, resulting in insensitivity to numerical changes, and the molecules generated lack optimization for specific desired objectives, such as drug-likeness, among others.}
In contrast, {\fwname} employed LLMs as the generative model and refines the generation process further through the \algname algorithm to guarantee the improved property in the optimized drug compared to original drug.

\section{Preliminaries}

\paragraph{Markov decision process.} We consider a finite-horizon Markov Decision Process (MDP) $\MDP_0=\langle\stateSpace,\actionSpace,\transDynamics,\RFunc, \horizon\rangle$ with state space $\stateSpace$, action space $\actionSpace$, deterministic transition dynamics $\transDynamics: \stateSpace \times \actionSpace \rightarrow \stateSpace'$, unknown reward function $\RFunc: \stateSpace \times \actionSpace \rightarrow \bracket{0,1}$, and horizon $\horizon$. We assume access to a set of $K$ critics each represents a domain experts, defined as 
${\mathbf{\critic}}=\curlybracket{\critic^k}_{k=1}^\CNum$, {where}
$\critic: \state_{\horizon} \rightarrow \mathbb{R}$ {and} $\state_{\horizon}$ represents a final state. The policy $\policy:\stateSpace \rightarrow \actionSpace$ maps the current state to a distribution over actions. 
{Given an initial state distribution $\rho_0 \in \Delta(\stateSpace)$, we define $\stateDist_t^{\policy}$ as the distribution over states at time $t$ under policy $\pi$.}
The goal is to train a policy to maximize the expected long-term reward.  The quality of the policy can be measured by the $\QFunc$-value function $\QFunc^{\policy}:\stateSpace \times \actionSpace \rightarrow \mathbb{R}$ is defined as:
$\QFunc^{\policy}\paren{\state,\action}:= \mathbb{E}^{\policy}\bracket{\sum_{t=0}^{\horizon}\RFunc\paren{\state_t,\action_t}|\state_0=\state,\action_0=\action}$,
where the expectation is taken over the trajectory following $\policy$, and the value function is noted as:
$\VFunc^{\policy}\paren{\state}:= \expctover{\action\sim\policy\paren{\cdot|\state}}{\QFunc^{\policy}\paren{\state,\action}}$.

{

\textbf{LLM.}
Each training corpus includes a start token $\bracket{\text{BOS}}$, a sequence of tokens $\mathbf{y}$ where each $y_i \in \Vocabulary$, and a termination action $\bracket{\text{EOS}}$. Here, each action $\action \in \actionSpace$ is represented as a token $y$ in the Transformer's vocabulary $\mathcal{V}$, with $\mathcal{V} := \actionSpace$. Each molecule is represented by a sequence of tokens $\mathbf{y}$ to construct a SMILES~\citep{weininger1988smiles} string, and this applies to both partial and complete molecules.
Let  $\circ$ represents string concatenation, and let $\mathcal{V}^*$ denote the Kleene closure of $\mathcal{V}$.
We define the set of complete training corpus as: 
\begin{equation}
    \corpus := \curlybracket{\text{[BOS]} \circ \mathbf{v} \circ \text{[EOS]}~|~\mathbf{v}\in \mathcal{V}^*}.
\end{equation}
The LLM generator policy $\policy_{\theta}$, which is parameterized by a deep neural network (DNN) with learned weights $\theta$, is defined as a product of probability distributions:
$\policy_{\theta}\paren{\mathbf{y}|\mathbf{x}}=\prod_{t=1}^{|\mathbf{y}|} \policy_{\theta}\paren{y_t|\mathbf{x},\mathbf{y}_{<t}}$, where  $\policy_{\theta}\paren{y_t|\mathbf{x},\mathbf{y}_{<t}}=P\paren{y_{t}|\mathbf{y}_{<t},X}$ is a distribution of next token $y_t$, $\mathbf{y}_{<t} = \bracket{y_1, \cdots, y_{t-1}}$, and $\mathbf{x}$ represents an input sequence (prompt).
The decoding process in text generation is designed to identify the most probable hypothesis from all potential candidates by resolving the following optimization problem:
\begin{equation}
\mathbf{y}^{\star}=\argmax_{\mathbf{y}\in \hypothesis_{\horizon}} \log \policy_{\theta}\paren{\mathbf{y}|\mathbf{x}}. 
\end{equation}
{To estimate the expected reward for a partial molecule, we employ 
\TOPPK \citep{liu2024erp} to navigate the exponentially vast search space to form a complete valid molecule}.
For sampling the token $y_i\sim\TOPPK\paren{\mathbf{y}_{<i},p,k}|_{\mathbf{x}}$,
where
\TOPPK 
generates the sequence by recursively picking the top candidates at each step $i$ according to 
\begin{gather}\label{eq:toppk}
    \text{\TOPPK}\paren{\mathbf{y}_{<i}, p,k}|_{\mathbf{x}}= \actionSpace_{\mathbf{y}_{<i}},\\
    \text{~where~} \actionSpace_{\mathbf{y}_{<i}}=\curlybracket{y^1,\ldots,y^{j}} \in \mathcal{V}^j, \text{~and}\\ \notag
     j= \min \curlybracket{\argmin_{j'}\sum_{\fix{l}=1}^{j'}\policy_{\theta}\paren{y^{\fix{l}}|\mathbf{x},\mathbf{y}_{<i}} \geq p, \: k},  \notag
\end{gather}
where the candidates $y^1, \dots, y^{j'}, \dots, y^{|\mathcal{V}|} \in \mathcal{V}$ are indexed by descending order of $\pi_\theta(\cdot \given \mathbf{x},\mathbf{y}_{<i})$, $p\in (0,1]$ denotes the 
\fix{cumulative probability threshold} 
and $k$ represents the maximum number of candidates for the next tokens.
$\bestN$~\citep{gao2023scaling} sampling technique at inference time generates $N$ samples which are then ranked by the reward model. Then top ranked candidate is selected, which can expressed as 
\begin{align}
{\bestN}\paren{\mathbf{y}_{<i},N,\RFunc}|_{\textbf{x},p,k}= \max_{\mathbf{Y}_{j}\in \curlybracket{\mathbf{Y}_1,\cdots,\mathbf{Y}_{N}}} \RFunc\paren{ \mathbf{Y}_{j}},\\
\text{where } Y_j=\bracket{\mathbf{y}_{<i},y_i,\cdots,y_{\horizon}}_j,\\ \text{and } y_i \sim \TOPPK\paren{\mathbf{y}_{<i},p, k}|_{\mathbf{x}}.
\end{align}\label{eq:bestN}
}
\paragraph{Drug {optimization}.} 
We formalize the drug {optimization} problem within the framework of MDP. Given a dataset consisting of real-world structured sequences represented as SMILES~\citep{weininger1988smiles} strings, 
our objective is to train a {LLM-based} generative policy $\Gen$ to generate a high-quality sequence denoted as $\mathbf{y}_{\horizon}=\paren{y_1,\ldots,y_t,\ldots,y_{\horizon}},y_t\in \Vocabulary$, {and aim to outperforming an input sequence X in desired properties}.
The length of the output sequence, denoted as $\horizon$, represents the planning horizon. At time step $t$, the state $\state_{t-1}$ comprises the currently generated tokens $\paren{y_1,\ldots,y_{t-1}}$, and the action $\action$ corresponds to the next token $y_t$ to be selected. While the policy model $\Gen\paren{y_t|\mathbf{y}_{<t}, \fix{X}}$ operates in a stochastic manner, the state transition function $\transDynamics$ becomes deterministic once an action has been chosen. 
To estimate the $\QFunc$ value, we reference the REINFORCE algorithm \citep{williams1992simple}, which we define as 
$\QFunc\paren{\state= 
\mathbf{y}_{<\horizon},\action=y_{\horizon}}=\RFunc\paren{\mathbf{y}_{\horizon}}. $

\paragraph{Limitations of previous work.} 

1) Prior studies concentrated primarily on the discovery of new drugs from the ground up \citep{atance2022novo, popova2018deep, zhang2023universal}. In contrast, we focus on the relatively less explored, yet highly practical and significant, issue of drug optimization.
2) {There is currently no RL finetuning algorithm specifically designed for drug optimization problems.}
{3) The current state-of-the-art model, REINVENT~4, prioritizes pretraining with constrained similarity, thereby restricting its ability to explore molecular spaces with potentially high rewards beyond its training set.
Our drug optimization LLM, together with the Structured Policy Optimization approach, addresses these limitations.}

\section{The \fwname Framework} %
{In this work, we propose \fwname  as in \figref{fig:drugimprover_framework}, which}
comprises {two} major components: 
{(1) A large language model designed \fix{from the ground up} for drug optimization.}
{(2) A Structured Policy Optimization (\algname)
{algorithm with theoretical support.}
} 
{We introduce each part in details as follows.}

\subsection{{\fix{Designing \& }pretraining a LLM generator}}\label{sec:pretrain}
In drug optimization, firstly, we construct a molecule pair $\paren{X,Y}$,
where $X$ represents for original molecule, and $Y$ represents for the target optimized molecule. We randomly select non-duplicated pair $\paren{X,Y}$ from ZINC15~\citep{sterling2015zinc} dataset \fix{(See Appendix \ref{app:pretrain_data})}, and added the pair to the training set by meeting the following criteria of Tanimoto Similarity~\citep{bajusz2015tanimoto} and molecule scaffold~\citep{landrum2013rdkit}:
\begin{equation}
\text{Tanimoto}\paren{X,Y} > 0.5 \text{~~or~~}\text{Scaffold}(X) = \text{Scaffold}(Y), \notag
\end{equation}
The motivation for such a form is to provide an initialization for a diversified molecule pair while constraining the similarity between the pair. After obtaining the training set of molecule pairs, we formed the training corpus as follows:
\begin{equation}\label{eq:corpus}
\corpus=\curlybracket{
\tuple{S},\underbrace{x_1,\cdots,x_{\horizon}}_{\text{source molecule X}},\tuple{L},\underbrace{y_1,\cdots,y_{\horizon}}_{\text{target molecule Y}}
},
\end{equation}
where $\tuple{S}$ stands for the source ligand and $\tuple{L}$ stands for the target ligand. \fix{We include visualization towards the corpus in Appendix \ref{corpus_viz}.}
We enhance the training by concentrating on pairs of molecules through Causal Language Modeling (CLM)~\citep{vaswani2017attention}.
The parameters $\theta$ of the LLM generator $\policy_{\theta}$ are trained through the minimization of the negative log-likelihood (NLL) for the complete molecular pair across the entire training corpus.
This process is described as follows:
\begin{align}\label{eq:pretrain-loss}
    \text{NLL}\paren{X,Y} =  - \log P\paren{Y|X} = - \log \prod_{l=1}^{\horizon} P\paren{y_{l}|\mathbf{y}_{<l},X} 
    =-\sum_{l=1}^{\horizon} \log P\paren{y_{l}|\mathbf{y}_{<l},X},
\end{align}
where $\horizon$ signifies the total number of tokens related to $Y$. The NLL measures the likelihood of transforming a specific original molecule into a designated target molecule. 
Given the goal of drug optimization, we aim for the generated drugs to resemble the originals. Therefore, we have incorporated a regularization term into the loss in Equation \eqref{eq:pretrain-loss}, which penalizes the NLL if the sequence does not adhere to a specified similarity metric.
Finally, we propose the following loss function:
\begin{align}\label{eq:loss}
    \mathcal{L}=
    \frac{1}{|\corpus|}\sum_{\paren{X,Y}\in \corpus}
    (\lambda\cdot\text{NLL}\paren{X,Y} \fix{/} \paren{ \paren{1-\lambda}\cdot \text{Similarity}\paren{ X, Y})}, \lambda\in \fix{\paren{0,1}}.
\end{align}
Consequently, 
training the model with the loss function described in Eqn. \eqref{eq:loss} can generate the corresponding target molecule when provided with a source molecule.
However, this approach primarily focuses on maximizing likelihood without considering specific metrics of interest, making it unsuitable for optimizing objectives that differ from those in its training set, as encoded in $\policy_{\theta}$. 
Therefore, these generation algorithms cannot be directly applied to design molecules that fulfill various objectives, such as attaining a high docking score at a specific target site or improving upon specific desired metrics.
We aim to further refine the LLM model to generate specific improved outcomes using reinforcement learning techniques in the next phase.

\fix{

}

\subsection{Structured policy {optimization}\label{sec:4.2}
}
\begin{algorithm}[t]
    \caption{Structured Policy Optimization (\algname)
    }\label{alg:lops}
    \begin{algorithmic}[1] 
    \Require { LLM-based generator $\Gen$; roll-out policy $\policy_\beta$; a pre-train dataset $\RationaleBuffer$, {critics $\mathbf{\critic}$}.}
    \State Pre-train $\Gen$ using loss function~\eqref{eq:loss} and training corpus  \eqref{eq:corpus} through CLM objective.
    \State $\beta\leftarrow\theta$.

    \For{$n=1,\ldots,\ENum$}
            \State $\fix{X} \sim \rho_0$, { \text{where }$\rho_0\in \Delta\paren{\RationaleBuffer}$}.
            \State Generate $Y_{1:\horizon}=\paren{y_t,\ldots,y_{\horizon}}\sim \Gen\paren{\cdot|X}$.
        \LineComment{/* incorporating partial reward */}
            \State Compute advantage preference
            {$\RFunc^\text{AP}$}
            by incorporating partial molecule component. %
            \State Update generator $\theta$
            via policy gradient by \eqref{eq:adv:preference}\eqref{eq:gradient_update}.
        \State $\beta\leftarrow\theta$.
    \EndFor
    \end{algorithmic}
\end{algorithm}

\paragraph{{Normalized reward}.} 
\fix{In this work, we adopt the approach of \citet{liu2024erp} to construct the reward for multiple critics.}
Given an ensemble of critics 
{
\begin{align*}
\mathbf{C}{\paren{
\mathbf{y}_{\horizon}
}}=[
\critic^{\text{Druglikeness}}{\paren{\mathbf{y}_{\horizon}}},
\critic^{\text{Solubility}}{\paren{\mathbf{y}_{\horizon}}},
\critic^{\text{Synthesizability}}{\paren{\mathbf{y}_{\horizon}}}, {\critic^{\text{Docking}}}{\paren{\mathbf{y}_{\horizon}}}
],
\end{align*}
}
where  $\mathbf{y}_{\horizon}:=\state_{\horizon}, \critic: \fix{\mathbf{y}_{\horizon}} \rightarrow \mathbb{R}$. 
\fix{{We leverage the RDKit  \citep{landrum2016rdkit} chemoinformatics package to calculate the listed critics:}
\textbf{Druglikeness:} The druglikeness measure the likelihood of a molecule being suitable candidate for a drug.
\textbf{Solubility:} This metric assesses the likelihood of a molecule's ability to mix with water, commonly referred to as the water-octanol partition coefficient (LogP). 
\textbf{Synthetizability:} This parameter quantifies the ease (score of 1) or difficulty (score of 10) associated with synthesizing a given molecule \citep{ertl2009estimation}.
\textbf{Docking Score:} The docking score assesses the drug's potential to bind and inhibit the target site. 
To enable efficient computation, we employ a docking surrogate model (See \appref{app:surrogate_model}) to output this score.}
Here we design the reward function to align the drug optimization with multiple objectives.
For a fully generated SMILES sequence, we derive the following
{normalized}
reward function based on assessments from multiple critics with equal weight \citep{liu2024erp} as follows:
{
\begin{align}\label{eq:reward}
{\RFunc_c\paren{\mathbf{y}_{\horizon}}}:=&
{\RFunc_c\paren{
\mathbf{y}_{\horizon}|X}} =
\beta\cdot\text{Norm}\paren{\critic^{\text{Tanimoto}}\paren{X,
\mathbf{y}_{\horizon}}}\notag \\ 
&+
\sum_{i=0}^{|\mathbf{C}|-1}\lambda\cdot\text{Norm}\paren{\critic_i{\paren{\mathbf{y}_{\horizon}}}},
\end{align}
}
where $\lambda=\frac{1-\beta}{|\mathbf{\critic}|+1}$.
{We use Norm\footnote{{Here, we define Norm as min-max normalization to scale the attributes onto the range [-10, 10].}} to normalize different attributes
{onto the} same scale.}
In this study, we employ the Tanimoto similarity calculation $\critic^{\text{Tanimoto}}$ to quantify the chemical similarity
between the generated compound and the original drug. 
Essentially, this calculation involves first computing Morgan Fingerprints~\citep{rogers2010extended} for each molecule and then measuring the 
{Jaccard distance~\citep{jaccard1912distribution} (i.e., intersection over union)}
 between the two fingerprints. \YC{expand the definition if still having space}

\begin{figure*}[t]
    \begin{subfigure}{1\textwidth}
        \centering
        \includegraphics[
        height=4.8cm, 
        clip={0,0,0,0}]{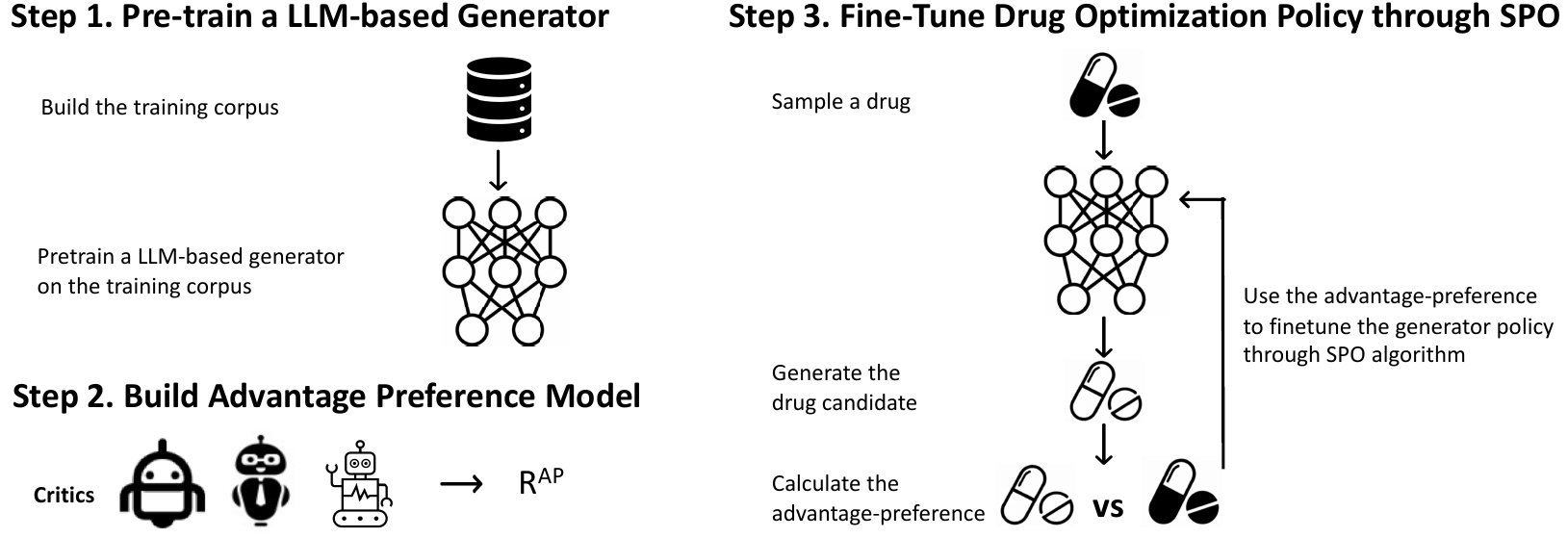}
    \end{subfigure}
    \caption{{\fwname framework.} 
    {It comprises two major components: {(1)  A large language model designed for drug optimization.} (2) A Structured Policy Optimization (\algname) algorithm aims to fine-tune the LLM-based generator for drug improvement across desired properties.\YCinline{Fine-tune} 
    }
    }
  \label{fig:drugimprover_framework}
\end{figure*}

\paragraph{Structured policy gradient \fix{with \emph{partial} molecule improvement}.} 
The return, denoted as $\QFunc^{\policy}$, often exhibits significant variance across multiple episodes. One approach to mitigate this issue is to subtract a baseline $b\paren{\state}$ from each $\QFunc$. The baseline function can be any function, provided that it remains invariant with respect to $\action$. For a generator policy $\Gen$, the advantage function \citep{sutton1999policy} is defined as follows:
    $\AFunc^{\Gen}\paren{\state,\action} = \QFunc^{\Gen}\paren{\state,\action} - b\paren{\state}$.
A natural choice for the baseline is the value function $\VFunc^{\policy}\paren{\state}$, which represents the expected reward at a given state $\state$ under policy $\policy$. The value function can be expressed as follows:
\begin{align}
\VFunc\paren{\state}&=\expctover{\action\sim\Gen\paren{\state}}{\QFunc(\state,\action)}
=\expctover{y_t\sim\Gen\paren{\mathbf{y}_{<t}}}{\QFunc(
\mathbf{y}_{<t},y_t)}.\notag
\end{align}
Thus, we have advantage function as 
\begin{align}
     \AFunc^{\Gen}\paren{\state,\action} =\AFunc^{\Gen}\paren{\mathbf{y}_{<t},y_{t}}
     =\QFunc^{\Gen}\paren{\mathbf{y}_{<t},y_t}-\VFunc^{\Gen}\paren{\mathbf{y}_{<t}}.\notag
\end{align}

{
Here we employ the one-step RL~\citep{brandfonbrener2021offline, peng2019advantage} method and regard the drug optimization method as a {sequence to sequence} language generation task. 
Rather than treating each token as an individual action, we treat the entire sequence $Y_{1:\horizon}=\mathbf{y}_{\horizon}$ as a single action generated by the policy $\Gen$. Subsequently, we receive rewards from critics, and the episode concludes. This leads to the formulation of our advantage function as follows:}
\begin{align}\label{eq:advantage}
\AFunc^{\Gen}\paren{\state,\action}&=\QFunc^{\Gen}\paren{\state_0,
\mathbf{y}_{\horizon}}-\VFunc^{\Gen}\paren{\state_0}\\
&=\RFunc_c\paren{Y_{1:\horizon}}-\RFunc_c\paren{X},
\end{align}
where $\state_0=X$ is the initial molecule sequence drawn from the distribution $\rho_0$, which corresponds to our buffer known as $\RationaleBuffer$ {containing selected SMILES strings}.
Thus, the advantage preference of the generated versus the original drug is
\begin{equation}\label{eq:advantage_preference_temp}
r^\AP\paren{Y_{1:\horizon},X} ={\RFunc_c\paren{Y_{1:\horizon}} - \RFunc_c\paren{X}},
\end{equation}
Nonetheless, the reward function only supports a reward value for a completed sequence for global optimization. \fix{In contrast with previous approaches}, we also aim to make improvement on partial molecule for local optimization by uniformly sample a subsquence $Y_{1:j},j \sim \mathcal{U}([T])$. 
To achieve this, we employ a Roll-in-Roll-out (RIRO) \citep{ross2014reinforcement, cheng2020policy,liu2023active,liu2023blending} scheduling, utilizing a roll-out policy denoted as $\policy_\beta$ (same as learner policy in our experiment) to sample the unknown last $\horizon-j$ tokens through \fix{$\bestN$} approach~\eqref{eq:bestN}. 
\fix{We propose a novel notion of advantage function, termed Advantage Preference (AP), by incorporating partial molecule improvement into \eqref{eq:advantage_preference_temp}:}

\vspace{+0.2cm}
\begin{definition}[Advantage preference] {We define advantage preference as a combination of rewards from both complete and partial molecules:} 
\vspace{+0.2cm}
\begin{align}\label{eq:advantage_preference}
    R^{\AP}(Y_{1:T}, X) = \frac 1 2 \E_{j\in \mathcal{U}([T])} r_{\BON(j)}^{\AP}(Y_{1:T}, X) + \frac 1 2 r^\AP(Y_{1:T},X),
\end{align}
where 
\begin{align*}
r_{\BON(j)}^{\AP}(Y_{1:T}; X_{1:T})=&\E_{Y_{j+1:T}\sim \BON(Y_{1:j}),X_{j+1:T}\sim \BON(X_{1:j})}[ 
R_c(Y_{1:T}) - R_c(X_{1:T})\given X_{1:j}, Y_{1:j}]
\end{align*}
and $r^{\AP}(Y_{1:T}; X_{1:T})$ is defined in Eq. \eqref{eq:advantage_preference_temp}.
\end{definition}

The \text{advantage preference} of \eqref{eq:advantage_preference} will be employed directly in the policy gradient~\eqref{eq:adv:preference} to finetune the {generator} policy $\Gen$. The rationale behind the advantage preference is
to produce a sequence that surpasses the initial state sequence $\state_0$ in every objective.
{In this work, our objective is to maximize the expected final advantage preference compared to the original drug $\fix{X}$
at the end of the sequence as follows
\begin{equation}
 J\paren{\theta}=\expctover{{X\sim\rho_0, Y_{1:T}\sim \pi_\theta(\cdot\given X)}}{\RFunc^\AP(Y_{1:T},X)},
 \label{eq:J(theta)}
\end{equation}
}
Thus, we have gradient $g$ as follows:
\fix{
\begin{align}\label{eq:adv:preference}
     \textstyle \mathbb{E}_{X\sim {\rho_0},{Y_{1:\horizon}\sim \Gen\paren{\cdot|X}} } 
     \left[ \nabla_{\theta}\log \Gen\paren{Y_{1:\horizon}|X}
     \cdot \RFunc^\AP\paren{X,Y_{1:\horizon}}\right],
\end{align}
}
{where $Y_{1:\horizon}$ is the generated sequence 
from $\Gen$ and $X$ is the original drug}. As the expectation $\expct{\cdot}$ can be approximated through sampling techniques, we proceed to update the generator's parameters as follows:
\YC{Move definition of $g$ (from Lemma 5.2) here}

\begin{equation}\label{eq:gradient_update}
    \theta\leftarrow\theta+\alpha_n\gradient,
\end{equation}
where $\alpha\in \mathbb{R}^{+}$ denotes the learning rate at $n$-th episode.

\begin{table*}[t!]
\setlength{\tabcolsep}{4pt}
   \centering
    { %
    \scalebox{0.65}{
    \begin{tabular}{l l c c c c c c c c c c }
        \toprule
        \textbf{Target} %
        & \textbf{Algorithm}
        & {\makecell[c]{Avg \\ Norm Reward~$\uparrow$}}
        & {\makecell[c]{Avg Top 10 \% \\ Norm Reward~$\uparrow$}}
        & {\makecell[c]{Docking ~$\downarrow$}}
        & {\makecell[c]{Druglikeliness ~$\uparrow$}}
        & {\makecell[c]{Synthesizability ~$\downarrow$}}
        & {\makecell[c]{Solubility ~$\uparrow$}}
        \\
        \midrule
        \makecell[l]{\textbf{3CLPro}} %
        &  \textbf{\makecell[l]{Original}}
        &  \makecell[l]{0.524}
        &  \makecell[l]{{0.689}}
        &  \makecell[l]{\underline{-8.687}}
        &  \makecell[l]{0.654}
        &  \makecell[l]{3.097 }
        &  \makecell[l]{2.455}
        \\
        (PDBID:
        &  \textbf{\makecell[l]{MMP \citep{loeffler2024reinvent}}}
        &  \makecell[l]{0.564 $\pm$ 0.003}
        &  \makecell[l]{0.680 $\pm$ 0.001}
        &  \makecell[l]{-8.184 $\pm$ 0.069}
        &  \makecell[l]{0.672 $\pm$ 0.003}
        &  \makecell[l]{2.658 $\pm$ 0.006}
        &  \makecell[l]{3.114 $\pm$ 0.067}
        \\
       \ 7BQY)
        &  \textbf{\makecell[l]{Similarity ($\geq$ 0.5) \citep{loeffler2024reinvent}}}
        &  \makecell[l]{0.572 $\pm$ 0.001}
        &  \makecell[l]{{0.686} $\pm$ 0.001}
        &  \makecell[l]{-8.158 $\pm$ 0.007}
        &  \makecell[l]{\textbf{0.686} $\pm$ 0.004}
        &  \makecell[l]{\underline{2.583} $\pm$ 0.010}
        &  \makecell[l]{3.121 $\pm$ 0.018}
        \\
        \textbf{ }
        &  \textbf{\makecell[l]{Similarity ([0.5, 0.7)]) \citep{loeffler2024reinvent}}}
        &  \makecell[l]{0.575 $\pm$ 0.002}
        &  \makecell[l]{0.686 $\pm$ 0.004}
        &  \makecell[l]{-8.171 $\pm$ 0.053}
        &  \makecell[l]{0.676 $\pm$ 0.002}
        &  \makecell[l]{2.588 $\pm$ 0.018}
        &  \makecell[l]{3.309 $\pm$ 0.032}
        \\
        \textbf{ }
        &  \textbf{\makecell[l]{Similarity ($\geq$ 0.7) \citep{loeffler2024reinvent}}}
        &  \makecell[l]{0.560 $\pm$ 0.002}
        &  \makecell[l]{0.677 $\pm$ 0.002}
        &  \makecell[l]{-8.187 $\pm$ 0.024}
        &  \makecell[l]{ 0.668 $\pm$ 0.007}
        &  \makecell[l]{2.699 $\pm$ 0.007}
        &  \makecell[l]{3.120 $\pm$ 0.013}
        \\
        \textbf{ }
        &  \textbf{\makecell[l]{Scaffold \citep{loeffler2024reinvent}}}
        &  \makecell[l]{0.552 $\pm$ 0.004}
        &  \makecell[l]{0.678 $\pm$ 0.009}
        &  \makecell[l]{-8.081 $\pm$ 0.049}
        &  \makecell[l]{0.675 $\pm$ 0.002}
        &  \makecell[l]{2.741 $\pm$ 0.014}
        &  \makecell[l]{3.002 $\pm$ 0.040}
        \\
        \textbf{ }
        &  \textbf{\makecell[l]{Scaffold Generic \citep{loeffler2024reinvent}}}
        &  \makecell[l]{\underline{0.567} $\pm$ 0.001}
        &  \makecell[l]{{0.680} $\pm$ 0.007}
        &  \makecell[l]{-8.078 $\pm$ 0.056}
        &  \makecell[l]{\underline{0.680} $\pm$ 0.005}
        &  \makecell[l]{2.613 $\pm$ 0.002}
        &  \makecell[l]{3.173 $\pm$ 0.046}
        \\
        \textbf{ }
        &  \textbf{\makecell[l]{Molsearch \citep{sun2022molsearch}}}
        &  \makecell[l]{0.518  $\pm$ 0.002} 
        &  \makecell[l]{\textbf{0.693 }  $\pm$ 0.003}
        &  \makecell[l]{{-8.506}  $\pm$ 0.038}
        &  \makecell[l]{0.656  $\pm$ 0.004}
        &  \makecell[l]{3.110  $\pm$ 0.010}
        &  \makecell[l]{2.448 $\pm$ 0.032}
        \\
        \textbf{ }
        &  \textbf{\makecell[l]{MIMOSA \citep{fu2021mimosa}}}
        &  \makecell[l]{0.530  $\pm$ 0.003} 
        &  \makecell[l]{0.690  $\pm$ 0.005} 
        &  \makecell[l]{\textbf{-8.764 } $\pm$ 0.048}
        &  \makecell[l]{0.649  $\pm$ 0.003}
        &  \makecell[l]{3.148  $\pm$ 0.023}
        &  \makecell[l]{2.732  $\pm$ 0.027}
        \\
        \textbf{ }
        &  \textbf{\makecell[l]{DrugEx v3 \citep{liu2023drugex}}}
        &  \makecell[l]{0.532  $\pm$ 0.003} 
        &  \makecell[l]{0.653  $\pm$ 0.004} 
        &  \makecell[l]{{-8.089 } $\pm$ 0.039}
        &  \makecell[l]{0.583  $\pm$ 0.005}
        &  \makecell[l]{3.095  $\pm$ 0.018}
        &  \makecell[l]{\textbf{3.932}  $\pm$ 0.031}
        \\
        \textbf{ }
        &  \textbf{\makecell[l]{\fwname (Ours)}}
        &  \makecell[l]{\textbf{0.601} $\pm$ 0.003} 
        &  \makecell[l]{{\underline{0.692}} $\pm$ 0.003} 
        &  \makecell[l]{{-8.163}  $\pm$ 0.034}
        &  \makecell[l]{{0.676}  $\pm$ 0.004}
        &  \makecell[l]{\textbf{2.381} $\pm$ 0.011}
        &  \makecell[l]{\underline{3.673}  $\pm$ 0.024}
        \\
        \bottomrule
        \textbf{RTCB}
        &  \textbf{\makecell[l]{Original}}
        &  \makecell[l]{0.538}
        &  \makecell[l]{{0.705}}
        &  \makecell[l]{-8.538}
        &  \makecell[l]{0.716}
        &  \makecell[l]{2.984}
        &  \makecell[l]{2.283}
        \\
        (PDBID:
        &  \textbf{\makecell[l]{MMP \citep{loeffler2024reinvent}}}
        &  \makecell[l]{0.583 $\pm$ 0.000}
        &  \makecell[l]{0.700 $\pm$ 0.001}
        &  \makecell[l]{-8.466 $\pm$ 0.012}
        &  \makecell[l]{0.709 $\pm$ 0.001}
        &  \makecell[l]{2.599 $\pm$ 0.004}
        &  \makecell[l]{2.978 $\pm$ 0.021}
        \\
        \ 4DWQ)
        &  \textbf{\makecell[l]{Similarity ($\geq$ 0.5) \citep{loeffler2024reinvent}}}
        &  \makecell[l]{\underline{0.593} $\pm$ 0.004}
        &  \makecell[l]{0.705 $\pm$ 0.001}
        &  \makecell[l]{-8.581 $\pm$ 0.096}
        &  \makecell[l]{0.715 $\pm$ 0.007}
        &  \makecell[l]{2.561 $\pm$ 0.005}
        &  \makecell[l]{3.065 $\pm$ 0.048}
        \\
        \textbf{ }
        &  \textbf{\makecell[l]{Similarity ([0.5, 0.7)]) \citep{loeffler2024reinvent}}}
        &  \makecell[l]{0.591 $\pm$ 0.002}
        &  \makecell[l]{0.709 $\pm$ 0.003}
        &  \makecell[l]{-8.526 $\pm$ 0.002}
        &  \makecell[l]{0.710 $\pm$ 0.006}
        &  \makecell[l]{2.561 $\pm$ 0.001}
        &  \makecell[l]{3.116 $\pm$ 0.089}
        \\
        \textbf{ }
        &  \textbf{\makecell[l]{Similarity ($\geq$ 0.7) \citep{loeffler2024reinvent}}}
        &  \makecell[l]{0.585 $\pm$ 0.001}
        &  \makecell[l]{0.705 $\pm$ 0.004}
        &  \makecell[l]{-8.584 $\pm$ 0.019}
        &  \makecell[l]{0.723 $\pm$ 0.000}
        &  \makecell[l]{2.604 $\pm$ 0.003}
        &  \makecell[l]{2.841 $\pm$ 0.009}
        \\
        \textbf{ }
        &  \textbf{\makecell[l]{Scaffold \citep{loeffler2024reinvent}}}
        &  \makecell[l]{0.581 $\pm$ 0.001}
        &  \makecell[l]{0.701 $\pm$ 0.004}
        &  \makecell[l]{-8.524 $\pm$ 0.011}
        &  \makecell[l]{0.718 $\pm$ 0.003}
        &  \makecell[l]{2.618 $\pm$ 0.021}
        &  \makecell[l]{2.840 $\pm$ 0.018}
        \\
        \textbf{ }
        &  \textbf{\makecell[l]{Scaffold Generic \citep{loeffler2024reinvent}}}
        &  \makecell[l]{0.592 $\pm$ 0.003}
        &  \makecell[l]{0.704 $\pm$ 0.004}
        &  \makecell[l]{-8.590 $\pm$ 0.033}
        &  \makecell[l]{0.725 $\pm$ 0.001}
        &  \makecell[l]{2.542 $\pm$ 0.018}
        &  \makecell[l]{2.916 $\pm$ 0.004}
        \\
        \textbf{ }
        &  \textbf{\makecell[l]{Molsearch \citep{sun2022molsearch}}}
        &  \makecell[l]{0.548 $\pm$ 0.002} 
        &  \makecell[l]{{0.731} $\pm$0.002}
        &  \makecell[l]{-8.750  $\pm$ 0.028}
        &  \makecell[l]{\underline{0.730} $\pm$ 0.003}
        &  \makecell[l]{2.981 $\pm$ 0.014}
        &  \makecell[l]{2.290 $\pm$ 0.027}
        \\
        \textbf{ }
        &  \textbf{\makecell[l]{MIMOSA \citep{fu2021mimosa}}}
        &  \makecell[l]{0.553 $\pm$ 0.002} 
        &  \makecell[l]{0.721  $\pm$ 0.002} 
        &  \makecell[l]{\underline{-8.980} $\pm$ 0.039}
        &  \makecell[l]{0.716 $\pm$ 0.002}
        &  \makecell[l]{3.066 $\pm$ 0.017}
        &  \makecell[l]{2.491 $\pm$ 0.019}
        \\
        \textbf{ }
        &  \textbf{\makecell[l]{{DrugEx v3} \citep{liu2023drugex}}}
        &  \makecell[l]{0.642 $\pm$ 0.002} 
        &  \makecell[l]{\underline{0.754}  $\pm$ 0.002} 
        &  \makecell[l]{{-8.762} $\pm$ 0.037}
        &  \makecell[l]{0.583 $\pm$ 0.002}
        &  \makecell[l]{\underline{2.488} $\pm$ 0.015}
        &  \makecell[l]{\textbf{5.827} $\pm$ 0.017}
        \\
        \textbf{ }
        &  \textbf{\makecell[l]{\fwname (Ours)}}
        &  \makecell[l]{\textbf{0.694} $\pm$ 0.002} 
        &  \makecell[l]{\textbf{0.754} $\pm$ 0.003} 
        &  \makecell[l]{\textbf{-9.462} $\pm$ 0.038}
        &  \makecell[l]{\textbf{0.794} $\pm$ 0.003}
        &  \makecell[l]{\textbf{2.077} $\pm$ 0.017}
        &  \makecell[l]{\underline{3.712}  $\pm$ 0.028}
        \\
        \bottomrule
        \\
    \end{tabular}}}
        \caption{
        {\textbf{Main results.} A comparison of seven baselines including Original, six baselines from REINVENT~4 \{MMP, Similarity $\geq 0.5$, Similarity $\in [0.5,0.7)$, Similarity $\geq 0.7$, Scaffold, Scaffold Generic\}, Molsearch, MIMOSA, DrugEx v3, 
        and \fwname on multiple objectives 
        based on 3CLPro and RTCB datasets with Tanimoto Similarity above 0.6. {The top two results are highlighted as \textbf{1st} and \underline{2nd}. Results are reported for 5 experimental runs.} 
        }
        }
        \label{exp:main_result}
\end{table*}
\vspace{+0.6cm}
\section{Theoretical Analysis}\label{sec:theory}
\fix{We now provide a theoretical analysis of \algname and prove its effectiveness and superiority over prior RL algorithm for both local and global optimizations.}
Recall that we have optimization target $J(\theta)$ defined in \eqref{eq:J(theta)} with advantage function $R^\AP$ defined in \eqref{eq:advantage_preference}.
Define $J_0(\pi)=\E_{X\sim \rho_0}[r_\AP^\pi(X)] = \E^\pi[R_c(Y_{1:T}) - R_c(X)]$ as the ``standard'' RL metric.
We say that BON \emph{strictly improves} over suboptimal molecule if
\begin{align}
    r_{\BON(j)}^{\AP}(Y_{1:T}; X) > r^{\AP}(Y_{1:T}, X), \quad \forall  j\in[T], 
    \label{eq:strict improvment}
\end{align}
for any $Y_{1:T}$ such that $R_c(Y_{1:T}) < \max_{Y_{1:T}'} R_c(Y_{1:T}')$.
\paragraph{\algname can Find the Optimizer.}
Our first result compare the maximizers of $J(\cdot)$ under the \algname framework to those of $J_0(\pi) = \E^\pi[R_c(Y_{1:T})]$.
\begin{lemma}
\label{lem:equivalence}
\vspace{+0.3cm}
    If BON finds a sequence that strictly improves over the current molecule in the sense of \eqref{eq:strict improvment},
    any policy $\pi^*$ maximizes $J(\pi)$ if and only if it maximizes the original reward $J_0(\pi)$.
\end{lemma}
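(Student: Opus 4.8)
The plan is to reduce the statement to a comparison of two deterministic reward functions on the finite set $\hypothesis_{\horizon}$ of length-$\horizon$ token sequences, and then to show these functions share the same maximizers. First I would unfold the definition \eqref{eq:advantage_preference}. Writing $\bar g(Z_{1:j}) := \E_{Z_{j+1:\horizon}\sim \BON(Z_{1:j})}\!\big[R_c(Z_{1:j}\circ Z_{j+1:\horizon})\big]$ for the expected reward of a $\BON$-completion of a prefix (so $\bar g(Z_{1:\horizon})=R_c(Z_{1:\horizon})$), the definition collapses to
\[
R^{\AP}(Y_{1:\horizon},X)\;=\;F(Y_{1:\horizon})-F(X),\qquad F(Z):=\tfrac12\,\E_{j\sim\mathcal U([\horizon])}\!\big[\bar g(Z_{1:j})\big]+\tfrac12\,R_c(Z).
\]
Since $X\sim\rho_0$ does not depend on the learner, $\E_{X\sim\rho_0}[F(X)]$ is a $\pi$-independent constant, so $J(\pi)=\E_{X\sim\rho_0}\E_{Y\sim\pi(\cdot\given X)}[F(Y)]-\mathrm{const}$, while by definition $J_0(\pi)=\E_{X\sim\rho_0}\E_{Y\sim\pi(\cdot\given X)}[R_c(Y)]-\mathrm{const}'$. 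Both objectives are affine in $\pi$ and decouple across the conditioning state, so (using that $\hypothesis_{\horizon}$ is finite, hence the $\argmax$ sets below are nonempty) a policy $\pi^\star$ maximizes $J$ iff $\mathrm{supp}\,\pi^\star(\cdot\given X)\subseteq\argmax_Y F(Y)$ for $\rho_0$-a.e.\ $X$, and it maximizes $J_0$ iff $\mathrm{supp}\,\pi^\star(\cdot\given X)\subseteq\argmax_Y R_c(Y)$ for $\rho_0$-a.e.\ $X$. It therefore suffices to prove $\argmax_Y F(Y)=\argmax_Y R_c(Y)$.

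To do that, let $R^\star:=\max_{Y_{1:\horizon}\in\hypothesis_{\horizon}}R_c(Y_{1:\horizon})$ and $S^\star:=\argmax_Y R_c(Y)$. Two observations suffice. (i) Every prefix satisfies $\bar g(Z_{1:j})\le R^\star$, since $\bar g$ is an average of $R_c$-values; consequently $F\le R^\star$ everywhere. (ii) If a prefix $Z_{1:j}$ admits a completion attaining $R^\star$, then $\bar g(Z_{1:j})=R^\star$: otherwise $\BON$ would report a completed molecule of reward $<R^\star$, i.e.\ a suboptimal molecule, which by hypothesis \eqref{eq:strict improvment} is strictly improvable — contradicting that $\BON$ already returns the best of its candidate completions. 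In particular every prefix $Y^\star_{1:j}$ of an optimal sequence $Y^\star\in S^\star$ has $\bar g(Y^\star_{1:j})=R^\star$, so $F(Y^\star)=\tfrac12 R^\star+\tfrac12 R^\star=R^\star$; whereas for any suboptimal $Y$ we have $R_c(Y)<R^\star$ together with $\bar g(Y_{1:j})\le R^\star$, so $F(Y)\le\tfrac12 R^\star+\tfrac12 R_c(Y)<R^\star$. Hence $F$ attains its maximum $R^\star$ exactly on $S^\star$, i.e.\ $\argmax_Y F(Y)=S^\star=\argmax_Y R_c(Y)$, which proves the lemma.

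The hard part will be observation (ii): extracting from the per-$j$, per-pair inequality \eqref{eq:strict improvment} the clean consequence that $\BON$ recovers $R^\star$ on every prefix from which $R^\star$ is reachable. The subtlety is to argue that $\BON$ cannot return anything strictly worse than an available optimal completion without contradicting the hypothesis applied to its own (then suboptimal) output; making this airtight is exactly the role that \eqref{eq:strict improvment} plays in the argument. The remaining steps are bookkeeping that I would still verify carefully: that $J$ and $J_0$ differ from $\E^\pi[F(Y)]$ and $\E^\pi[R_c(Y)]$ only by $\pi$-independent constants (which needs the $\BON$-completion law defining $\bar g$ to be fixed rather than to track the learner $\pi_\theta$), and that maximizing an affine, state-decoupled objective over all policies is precisely the stated support condition — this is where finiteness of $\hypothesis_{\horizon}$ and the support of $\rho_0$ enter.
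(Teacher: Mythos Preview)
Your reduction is correct but takes a different route from the paper. You write $R^{\AP}(Y,X)=F(Y)-F(X)$ and reduce the lemma to the pointwise claim $\argmax_Y F(Y)=\argmax_Y R_c(Y)$ on the finite sequence space; the paper instead strips the same $X$-dependent shifts and works directly at the policy level via the sandwich $J_0(\pi^\star)\ge J(\pi)\ge J_0(\pi)$ (using that $J_{\BON}(\pi)\le J_0(\pi^\star)$ and $J_{\BON}(\pi)\ge J_0(\pi)$), from which the equivalence of maximizers drops out in two lines without ever characterizing the argmax sets. Both arguments rest on the same two facts --- $\bar g\le R^\star$ and ``$\BON$ is not worse than the current molecule'' --- so the difference is packaging: the paper's sandwich is shorter, while your decomposition makes the structure of the composite reward $F$ more visible.

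One remark on your observation~(ii): the contradiction you sketch (``$\BON$ reports a suboptimal output, which by \eqref{eq:strict improvment} is strictly improvable, contradicting that $\BON$ already chose the best of its candidates'') is fragile, because $\BON$ selects the best of $N$ \emph{random} completions and a fresh draw can legitimately do strictly better without any inconsistency --- there is no fixed point to contradict. The paper sidesteps this entirely by reading the hypothesis directly as the weak-improvement statement $\bar g(Y_{1:j})\ge R_c(Y)$; specializing to an optimal $Y^\star$ gives $\bar g(Y^\star_{1:j})\ge R^\star$, hence equality by your~(i), which is exactly your~(ii). With that one-line replacement your argument is complete and matches the paper's in what it assumes.
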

Given the fact that these two optimization targets share the same optimizer, we next study the benefit of using our definition $J$ for gradient update.

\paragraph{Densifying the Reward Signal.}
We remark that using $J(\pi)$ has the advantage of densifying the reward signal, thus making policy optimization easier.
In fact, each $r_{\BON(j)}^\AP(Y_{1:T}, X)$ serves as a reward signal for choosing the next action $Y_{j+1}$ at state $(Y_{1:j}, X)$.
\begin{lemma}
\vspace{+0.2cm}
    \label{lem:gradient}
     Gradient $g$ defined in \eqref{eq:adv:preference} can be rewritten as
    \begin{align}
        g =& \frac {1} {2T} \sum_{t=1}^T \E_{X\sim\rho_0}^\pi \left[\nabla_\theta \log \pi_\theta(Y_{1:t}\given X)  r_{\BON(t)}^{\AP}(Y_{1:T}, X)\right] \notag\\
        & +  \frac 1 2 \cdot \E_{X\sim\rho_0}^\pi \left[\nabla_\theta \log \pi_\theta(Y_{1:T}\given X)  r^\AP(Y_{1:T}, X) \right]. \label{eq:grad_reform}
    \end{align}
\end{lemma}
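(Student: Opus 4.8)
The plan is to start from the definition of the gradient $g$ in \eqref{eq:adv:preference}, substitute the definition of the advantage preference $R^{\AP}$ from \eqref{eq:advantage_preference}, and then split the resulting expression into two pieces corresponding to the partial-molecule term and the complete-molecule term. The complete-molecule half is already in the desired form, since $r^{\AP}(Y_{1:T},X)$ and $\nabla_\theta \log \pi_\theta(Y_{1:T}\given X)$ appear verbatim in the second line of \eqref{eq:grad_reform}. So the real work is entirely in the first half: I must show
\[
\E_{X\sim\rho_0}^\pi\!\left[\nabla_\theta \log \pi_\theta(Y_{1:T}\given X)\cdot \tfrac12\,\E_{j\sim\mathcal U([T])} r^{\AP}_{\BON(j)}(Y_{1:T},X)\right]
= \frac{1}{2T}\sum_{t=1}^T \E_{X\sim\rho_0}^\pi\!\left[\nabla_\theta \log \pi_\theta(Y_{1:t}\given X)\, r^{\AP}_{\BON(t)}(Y_{1:T},X)\right].
\]

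First I would pull the uniform average over $j$ outside, rewriting $\E_{j\sim\mathcal U([T])}[\,\cdot\,] = \frac1T\sum_{t=1}^T(\cdot)$, so it suffices to prove the per-$t$ identity $\E^\pi[\nabla_\theta\log\pi_\theta(Y_{1:T}\given X)\, r^{\AP}_{\BON(t)}(Y_{1:T},X)] = \E^\pi[\nabla_\theta\log\pi_\theta(Y_{1:t}\given X)\, r^{\AP}_{\BON(t)}(Y_{1:T},X)]$. The key step is to exploit the autoregressive factorization $\pi_\theta(Y_{1:T}\given X) = \pi_\theta(Y_{1:t}\given X)\cdot \pi_\theta(Y_{t+1:T}\given X, Y_{1:t})$, so that $\nabla_\theta\log\pi_\theta(Y_{1:T}\given X) = \nabla_\theta\log\pi_\theta(Y_{1:t}\given X) + \nabla_\theta\log\pi_\theta(Y_{t+1:T}\given X,Y_{1:t})$. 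I then need the cross term with the tail score to vanish; this is where the structure of $r^{\AP}_{\BON(t)}$ matters — by its definition it is a conditional expectation \emph{given} $X_{1:t}$ and $Y_{1:t}$ (with the tails $Y_{t+1:T}, X_{t+1:T}$ marginalized out by $\BON$), hence it is a function of $(X, Y_{1:t})$ only and does not depend on the realized $Y_{t+1:T}$ sampled from $\pi_\theta$. Therefore, conditioning on $(X, Y_{1:t})$ and taking the inner expectation over $Y_{t+1:T}\sim \pi_\theta(\cdot\given X, Y_{1:t})$, the factor $r^{\AP}_{\BON(t)}$ comes out, and one is left with $\E_{Y_{t+1:T}\sim\pi_\theta(\cdot\given X,Y_{1:t})}[\nabla_\theta\log\pi_\theta(Y_{t+1:T}\given X, Y_{1:t})] = \nabla_\theta \int \pi_\theta(Y_{t+1:T}\given X,Y_{1:t})\,dY_{t+1:T} = \nabla_\theta 1 = 0$, the standard score-function identity. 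Summing the surviving term back over $t$ and reinstating the $\frac{1}{2T}$ prefactor gives the first line of \eqref{eq:grad_reform}.

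I expect the main obstacle to be the bookkeeping around what exactly $\BON$ does and making rigorous the claim that $r^{\AP}_{\BON(t)}(Y_{1:T},X)$ is measurable with respect to $(X,Y_{1:t})$ alone — i.e., that the $\BON$ roll-outs on both $Y_{1:t}$ and $X_{1:t}$ are independent resamplings that wash out the dependence on the policy-sampled continuation $Y_{t+1:T}$. Once that measurability fact is pinned down (it follows directly from the conditional-expectation form in the definition of $r^{\AP}_{\BON(j)}$, where the conditioning event is precisely $X_{1:j}, Y_{1:j}$), the rest is the routine REINFORCE-style score-function manipulation and a swap of the finite sum over $t$ with the expectation, both of which are justified under the standard assumption that $\pi_\theta$ is differentiable and the horizon $T$ is finite so no convergence issues arise.
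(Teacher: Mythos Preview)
Your proposal is correct and follows essentially the same route as the paper's proof: substitute the definition of $R^{\AP}$, split into the $r^{\AP}$ and the uniform-in-$j$ $r^{\AP}_{\BON(j)}$ halves, factor the log-likelihood gradient autoregressively at each $t$, and kill the tail cross term via the score-function identity $\E^\pi[\nabla_\theta \log \pi_\theta(Y_{t+1:T}\given Y_{1:t},X)\given Y_{1:t},X]=0$. If anything, you are slightly more explicit than the paper in spelling out why $r^{\AP}_{\BON(t)}$ is $(X,Y_{1:t})$-measurable so that it can be pulled outside the inner conditional expectation, which is exactly the hidden step the paper's one-line justification relies on.
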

The last term corresponds to the reward at the end of the generation of the molecule, while the first term provides ``partial'' reward on each generation step.
This gradient form suggests that incorporating partial molecule's advantage function densifies the reward signal along the trajectory, enabling the learning agent to explore more efficiently \citep{riedmiller2018learning, vecerik2017leveraging}. \fix{We defer the proof of \lemref{lem:equivalence} and \lemref{lem:gradient} to \appref{app:theory_proof}.}

\section{Experiments}\label{experiments}

{\paragraph{The language model.} 
We utilize the Byte Pair Encoding (BPE) method \citep{gage1994new_bpe, sennrich2015neural_bpe} to initially pre-train our tokenizer using raw SMILES strings and GPT-2-like Transformers for causal language modeling. We train on the standard 11M Drug-like Zinc dataset, excluding entries with empty scaffold SMILES. The dataset is divided into a 90/10 split for training and validation, respectively. (For more details, see \appref{app:pretrain_data}).}

{\paragraph{Baselines.} 
In this study, 
we employ several baseline models, including 
Molsearch \citep{sun2022molsearch}, a search-driven approach leveraging Monte Carlo Tree Search (MCTS) for molecular generation and optimization,  MIMOSA \citep{fu2021mimosa}, a graph-based method for molecular optimization based on sampling and \fix{DrugEx v3 \citep{liu2023drugex}, a scaffold-based drug optimization using transformer-based reinforcement learning}. 
\fix{Moreover, we integrate the state-of-the-art model, Mol2Mol~\citep{he2021molecular, he2022transformer} from REINVENT~4~\citep{loeffler2024reinvent},} 
which trains a transformer to adhere to the Matched Molecular Pair (MMP) guidelines~\citep{tyrchan2017matched}.
Specifically, given a set $\{\{X,Y,Z\}\}$, where $X$ represents the source molecule, $Y$ denotes the target molecule, and $Z$ signifies the property change between $X$ and $Y$, the model learns a mapping from $\{X, Z\} \in \ensuremath{\mathcal{X}} \times \ensuremath{\mathcal{Z}} \implies Y \in \ensuremath{\mathcal{Y}}$ during training.
REINVENT~4 defines six types of property changes for $Z$, including MMP for user-specified alterations, various similarity thresholds, and scaffold-based modifications where molecules share the same scaffold or a generic scaffold.}

\paragraph{Datasets.}
{Utilizing the latest Cancer and COVID dataset proposed in this paper \fix{(See Appendix \ref{app:dataset}) for RL fine-tuning across all baseline models and our proposed method}, which consists 1 million compounds from the ZINC15 dataset docked to the 3CLPro~(PDB ID: 7BQY) protein associated with SARS-CoV-2 and the RTCB (PDB ID: 4DWQ) human cancer protein.}
This newly proposed dataset is utilized for RL fine-tuning across all baseline models and are not employed in the pretraining phase. 
For pretraining, we rely on molecules from the ZINC database, filtering for Standard, In-Stock, and Drug-Like molecules, resulting in approximately 11 million molecules (See details in Appendix \ref{app:pretrain_data}). We formed molecular pairs from the ZINC dataset, adhering to the guidelines used for creating the pretraining corpora of each baseline method. The specific rules for generating our molecular pairs are outlined in Section \ref{sec:pretrain} of our approach.

\paragraph{Critics and evaluation metric.} 
\fix{In addition to the metrics introduced in section \ref{sec:4.2}, we further added the following two metrics:}
{
\textbf{Average Top 10\% Norm Reward:} It is the average of the normalized reward of the top 10\% of molecules.
}
\fix{\textbf{Average Norm Reward:} It is the average of the normalized values of all metrics across valid molecules. This is the most important metric.}

\subsection{Experimental results}

Table \ref{exp:main_result} demonstrates that the \fwname algorithm outperforms all competing baselines, including the original and six variants from the current leading method, REINVENT~4, across most performance metrics for both viral and cancer-related benchmarks. 
It improves diversified properties and
significantly enhancing the critical metric of average normalized reward. 
\fwname excels over REINVENT~4 primarily because REINVENT~4 concentrates on pretraining with restricted similarity and fails to effectively enhance the properties of generated drugs, limiting exploration of potentially high-reward molecular spaces. In contrast, \fwname employs \algname to explore high-reward spaces while maintaining reasonable similarity, increasing the probability of generating sequences with positive advantages and decreasing it for negative ones. Additionally, \algname optimizes both entire and partial molecules, facilitating both global and local optimizations, which results in quicker convergence and improved performance.

\begin{wrapfigure}[9]{r}{.2\textwidth}
    \vspace{-10pt}
    \centering
    \includegraphics[width=\linewidth]{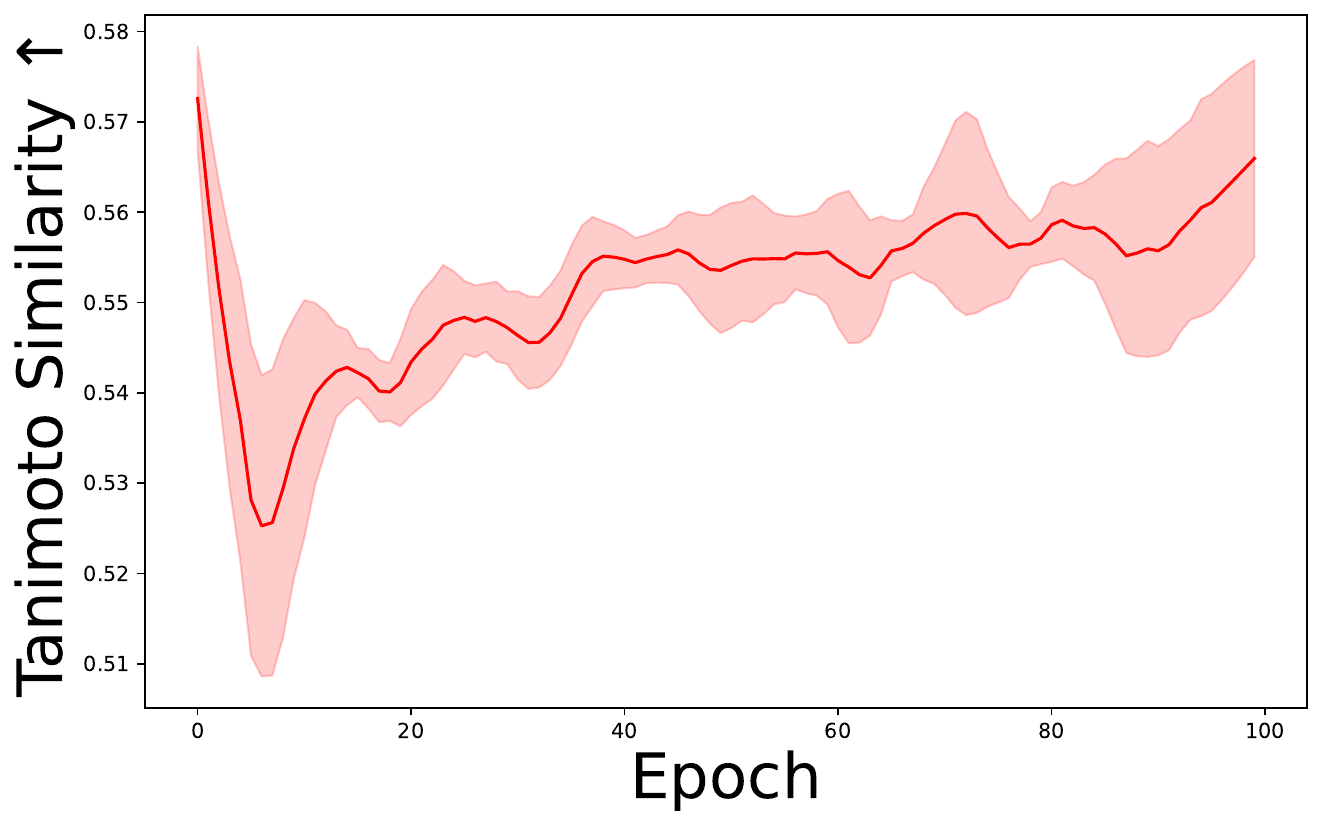}
    \vspace{-7mm}
    \caption{Tanimoto Similarity over five experimental runs.}\label{fig:exp:sim}
\end{wrapfigure}
\paragraph{\algname adjusting.} 
Note that the performance curve of Tanimoto similarity in~\figref{fig:exp:sim} initially decreases and then increases. This trend aligns ideally with the RL-based molecule generation improvement process.
The initial decrease occurs because \algname relaxes the structural constraints of the original molecule to achieve greater improvement in the diversified properties of the generated molecules. This causes the molecule to deviate from its original structure, leading to a decrease in Tanimoto similarity. 
Subsequently, there is a gradual increase in the trend as the generated molecules reach a decent level of diverse properties and begin optimizing their structure towards that of the original molecule, resulting in an increasing trend in Tanimoto similarity. Finally, the generated molecules not only improve the desired properties but also reduces the likelihood of drastic structural changes that might result in unsynthesizable compounds. 
{This process, illustrated in Figure \ref{fig:exp:sim}, showcases \algname's capability to automatically adjust the optimization of various properties to achieve global optimization.}

\begin{table*}[t!]
\setlength{\tabcolsep}{4pt}
   \centering
    { %
    \scalebox{0.62}{
    \begin{tabular}{l l l c c c c c c c c c c }
        \toprule
        \textbf{Target} %
        & \textbf{Algorithm}
        & \makecell[c]{\fix{Validity}~$\uparrow$}
        & {\makecell[c]{Avg \\ Norm Reward~$\uparrow$}}
        & {\makecell[c]{Avg Top 10 \% \\ Norm Reward~$\uparrow$}}
        & {\makecell[c]{Docking ~$\downarrow$}}
        & {\makecell[c]{Druglikeliness ~$\uparrow$}}
        & {\makecell[c]{Synthesizability ~$\downarrow$}}
        & {\makecell[c]{Solubility ~$\uparrow$}}
        \\
        \midrule
        \makecell[l]{\textbf{3CLPro}} %
        &  \textbf{\makecell[l]{\algname without partial}}
        &  \makecell[l]{\textbf{0.902}}        
        &  \makecell[l]{0.561}
        &  \makecell[l]{{0.666}}
        &  \makecell[l]{\textbf{-8.283}}
        &  \makecell[l]{0.614}
        &  \makecell[l]{2.740}
        &  \makecell[l]{3.597}
        \\
        (PDBID:7BQY)
        &  \textbf{\makecell[l]{\algname (Ours)}}
        &  \makecell[l]{{0.844}}
        &  \makecell[l]{\textbf{0.601}}
        &  \makecell[l]{\textbf{0.692}}
        &  \makecell[l]{-8.163}
        &  \makecell[l]{\textbf{0.676}}
        &  \makecell[l]{\textbf{2.381}}
        &  \makecell[l]{\textbf{3.673}}
        \\
        \bottomrule
        \textbf{RTCB}
        &  \textbf{\makecell[l]{\algname without partial}}
        &  \makecell[l]{{0.879}}
        &  \makecell[l]{0.592}
        &  \makecell[l]{{0.724}}
        &  \makecell[l]{-8.318}
        &  \makecell[l]{0.618}
        &  \makecell[l]{2.527}
        &  \makecell[l]{3.832}
        \\
        (PDBID:4DWQ)
        &  \textbf{\makecell[l]{\algname (Ours)}}
        &  \makecell[l]{\textbf{0.964}}        
        &  \makecell[l]{\textbf{0.694}}
        &  \makecell[l]{\textbf{0.754}}
        &  \makecell[l]{\textbf{-9.462}}
        &  \makecell[l]{\textbf{0.794}}
        &  \makecell[l]{\textbf{2.077}}
        &  \makecell[l]{\textbf{3.712}}
        \\
        \bottomrule
        \\
    \end{tabular}}}
        \caption{
        {\textbf{Ablation study.} 
        A comparison between \algname with and without the partial molecule improvement component shows that \algname with this component outperforms in most metrics.
        }}
        \label{exp:ablation}
\end{table*}

\paragraph{Ablation study on \algname.} 
\algname is distinguished from previous RL algorithms by its introduction of advantage preference with partial molecule improvement. In \tabref{exp:ablation}, we perform an ablation study on the partial molecule improvement component, showing that this component in \algname leads to performance enhancements across nearly all metrics, which aligned with our theoretical result of  densifying the reward signal. \fix{See \appref{spo_validity} for novelty and diversity ablation.}

\section{Conclusion}\label{sec:con}

We present the \fwname framework, which includes a LLM designed for drug optimization and \algname, a structured policy optimization algorithm—the novel RL finetuning algorithm tailored for drug optimization. 
We provide a rigorous theoretical analysis of \algname, demonstrating its effectiveness in aligning the LLM-based generator policy with desired objectives and performs efficient policy gradient updates based on the advantage preference. \algname seeks to achieve maximal improvement on desired properties based on the original drug while maintaining its necessary properties.
Moreover, we evaluate \fwname on SARS-CoV-2 and human cancer benchmarks, respectively. Our results reveal that our optimized compounds exhibit significant improvement over the original compounds and outperform the current state of the art in multiple properties.
Our research opens up new possibilities for enhancing drug optimization and inspires future investigations into addressing challenges within the realm of drug optimization. This includes exploring areas like the integration of graph information.
 We leave this extension to future work.

\subsubsection*{Acknowledgements}
We thank A. Vasan, A. Brace, O. Gokdemir, T. Brettin and F. Xia for initial discussion. This work is supported  by the RadBio-AI project (DE-AC02-06CH11357), U.S. Department of Energy Office of Science, Office of Biological and Environment Research; Improve project under contract (75N91019F00134, 75N91019D00024, 89233218CNA000001, DE-AC02-06-CH11357, DE-AC52-07NA27344, DE-AC05-00OR22725); Exascale Computing Project (17-SC-20-SC), a collaborative effort of the U.S. Department of Energy Office of Science and the National Nuclear Security Administration; and the National Science Foundation under Grant No. IIS 2313131, IIS 2332475, and DMS 2413243.

\bibliographystyle{plainnat}
\bibliography{reference}

\begin{thebibliography}{88}
\providecommand{\natexlab}[1]{#1}
\providecommand{\url}[1]{\texttt{#1}}
\expandafter\ifx\csname urlstyle\endcsname\relax
  \providecommand{\doi}[1]{doi: #1}\else
  \providecommand{\doi}{doi: \begingroup \urlstyle{rm}\Url}\fi

\bibitem[Atance et~al.(2022)Atance, Diez, Engkvist, Olsson, and Mercado]{atance2022novo}
Sara~Romeo Atance, Juan~Viguera Diez, Ola Engkvist, Simon Olsson, and Roc{\'\i}o Mercado.
\newblock De novo drug design using reinforcement learning with graph-based deep generative models.
\newblock \emph{Journal of Chemical Information and Modeling}, 62\penalty0 (20):\penalty0 4863--4872, 2022.

\bibitem[Avram et~al.(2023)Avram, Wilson, Curpan, Halip, Borota, Bora, Bologa, Holmes, Knockel, Yang, et~al.]{avram2023drugcentral}
Sorin Avram, Thomas~B Wilson, Ramona Curpan, Liliana Halip, Ana Borota, Alina Bora, Cristian~G Bologa, Jayme Holmes, Jeffrey Knockel, Jeremy~J Yang, et~al.
\newblock Drug{C}entral 2023 extends human clinical data and integrates veterinary drugs.
\newblock \emph{Nucleic Acids Research}, 51\penalty0 (D1):\penalty0 D1276--D1287, 2023.

\bibitem[Bagal et~al.(2021)Bagal, Aggarwal, Vinod, and Priyakumar]{bagal2021molgpt}
Viraj Bagal, Rishal Aggarwal, PK~Vinod, and U~Deva Priyakumar.
\newblock Mol{GPT}: Molecular generation using a transformer-decoder model.
\newblock \emph{Journal of Chemical Information and Modeling}, 62\penalty0 (9):\penalty0 2064--2076, 2021.

\bibitem[Bai et~al.(2022{\natexlab{a}})Bai, Jones, Ndousse, Askell, Chen, DasSarma, Drain, Fort, Ganguli, Henighan, et~al.]{bai2022training}
Yuntao Bai, Andy Jones, Kamal Ndousse, Amanda Askell, Anna Chen, Nova DasSarma, Dawn Drain, Stanislav Fort, Deep Ganguli, Tom Henighan, et~al.
\newblock Training a helpful and harmless assistant with reinforcement learning from human feedback.
\newblock \emph{arXiv preprint arXiv:2204.05862}, 2022{\natexlab{a}}.

\bibitem[Bai et~al.(2022{\natexlab{b}})Bai, Kadavath, Kundu, Askell, Kernion, Jones, Chen, Goldie, Mirhoseini, McKinnon, et~al.]{bai2022constitutional}
Yuntao Bai, Saurav Kadavath, Sandipan Kundu, Amanda Askell, Jackson Kernion, Andy Jones, Anna Chen, Anna Goldie, Azalia Mirhoseini, Cameron McKinnon, et~al.
\newblock Constitutional ai: Harmlessness from ai feedback.
\newblock \emph{arXiv preprint arXiv:2212.08073}, 2022{\natexlab{b}}.

\bibitem[Bajusz et~al.(2015)Bajusz, R{\'a}cz, and H{\'e}berger]{bajusz2015tanimoto}
D{\'a}vid Bajusz, Anita R{\'a}cz, and K{\'a}roly H{\'e}berger.
\newblock Why is tanimoto index an appropriate choice for fingerprint-based similarity calculations?
\newblock \emph{Journal of cheminformatics}, 7:\penalty0 1--13, 2015.

\bibitem[Bender and Glen(2004)]{bender2004molecular}
Andreas Bender and Robert~C Glen.
\newblock Molecular similarity: a key technique in molecular informatics.
\newblock \emph{Organic \& biomolecular chemistry}, 2\penalty0 (22):\penalty0 3204--3218, 2004.

\bibitem[B{\"o}hm et~al.(2019)B{\"o}hm, Gao, Meyer, Shapira, Dagan, and Gurevych]{bohm2019better}
Florian B{\"o}hm, Yang Gao, Christian~M Meyer, Ori Shapira, Ido Dagan, and Iryna Gurevych.
\newblock Better rewards yield better summaries: Learning to summarise without references.
\newblock \emph{arXiv preprint arXiv:1909.01214}, 2019.

\bibitem[Born et~al.(2021)Born, Manica, Oskooei, Cadow, Markert, and Mart{\'\i}nez]{born2021paccmannrl}
Jannis Born, Matteo Manica, Ali Oskooei, Joris Cadow, Greta Markert, and Mar{\'\i}a~Rodr{\'\i}guez Mart{\'\i}nez.
\newblock Paccmannrl: De novo generation of hit-like anticancer molecules from transcriptomic data via reinforcement learning.
\newblock \emph{Iscience}, 24\penalty0 (4), 2021.

\bibitem[Bran and Schwaller(2023)]{bran2023transformers}
Andres~M Bran and Philippe Schwaller.
\newblock Transformers and large language models for chemistry and drug discovery.
\newblock \emph{arXiv preprint arXiv:2310.06083}, 2023.

\bibitem[Brandfonbrener et~al.(2021)Brandfonbrener, Whitney, Ranganath, and Bruna]{brandfonbrener2021offline}
David Brandfonbrener, Will Whitney, Rajesh Ranganath, and Joan Bruna.
\newblock Offline rl without off-policy evaluation.
\newblock \emph{Advances in neural information processing systems}, 34:\penalty0 4933--4946, 2021.

\bibitem[Brunke et~al.(2022)Brunke, Greeff, Hall, Yuan, Zhou, Panerati, and Schoellig]{brunke2022safe}
Lukas Brunke, Melissa Greeff, Adam~W Hall, Zhaocong Yuan, Siqi Zhou, Jacopo Panerati, and Angela~P Schoellig.
\newblock Safe learning in robotics: From learning-based control to safe reinforcement learning.
\newblock \emph{Annual Review of Control, Robotics, and Autonomous Systems}, 5:\penalty0 411--444, 2022.

\bibitem[Cheng et~al.(2020)Cheng, Kolobov, and Agarwal]{cheng2020policy}
Ching-An Cheng, Andrey Kolobov, and Alekh Agarwal.
\newblock Policy improvement via imitation of multiple oracles.
\newblock \emph{Advances in Neural Information Processing Systems}, 33:\penalty0 5587--5598, 2020.

\bibitem[Christiano et~al.(2017)Christiano, Leike, Brown, Martic, Legg, and Amodei]{christiano2017deep}
Paul~F Christiano, Jan Leike, Tom Brown, Miljan Martic, Shane Legg, and Dario Amodei.
\newblock Deep reinforcement learning from human preferences.
\newblock \emph{Advances in neural information processing systems}, 30, 2017.

\bibitem[Clyde et~al.(2023)Clyde, Liu, Brettin, Yoo, Partin, Babuji, Blaiszik, Mohd-Yusof, Merzky, Turilli, et~al.]{clyde2023ai}
Austin Clyde, Xuefeng Liu, Thomas Brettin, Hyunseung Yoo, Alexander Partin, Yadu Babuji, Ben Blaiszik, Jamaludin Mohd-Yusof, Andre Merzky, Matteo Turilli, et~al.
\newblock Ai-accelerated protein-ligand docking for sars-cov-2 is 100-fold faster with no significant change in detection.
\newblock \emph{Scientific Reports}, 13\penalty0 (1):\penalty0 2105, 2023.

\bibitem[Dickson and Gagnon(2009)]{dickson2009cost}
Michael Dickson and Jean~Paul Gagnon.
\newblock The cost of new drug discovery and development.
\newblock \emph{Discovery medicine}, 4\penalty0 (22):\penalty0 172--179, 2009.

\bibitem[Dong et~al.(2023)Dong, Xiong, Goyal, Pan, Diao, Zhang, Shum, and Zhang]{dong2023raft}
Hanze Dong, Wei Xiong, Deepanshu Goyal, Rui Pan, Shizhe Diao, Jipeng Zhang, Kashun Shum, and Tong Zhang.
\newblock Raft: Reward ranked finetuning for generative foundation model alignment.
\newblock \emph{arXiv preprint arXiv:2304.06767}, 2023.

\bibitem[Ertl and Schuffenhauer(2009)]{ertl2009estimation}
Peter Ertl and Ansgar Schuffenhauer.
\newblock Estimation of synthetic accessibility score of drug-like molecules based on molecular complexity and fragment contributions.
\newblock \emph{Journal of cheminformatics}, 1:\penalty0 1--11, 2009.

\bibitem[Frey et~al.(2023)Frey, Soklaski, Axelrod, Samsi, Gomez-Bombarelli, Coley, and Gadepally]{frey2023neural}
Nathan~C Frey, Ryan Soklaski, Simon Axelrod, Siddharth Samsi, Rafael Gomez-Bombarelli, Connor~W Coley, and Vijay Gadepally.
\newblock Neural scaling of deep chemical models.
\newblock \emph{Nature Machine Intelligence}, 5\penalty0 (11):\penalty0 1297--1305, 2023.

\bibitem[Fu et~al.(2021)Fu, Xiao, Li, Glass, and Sun]{fu2021mimosa}
Tianfan Fu, Cao Xiao, Xinhao Li, Lucas~M Glass, and Jimeng Sun.
\newblock Mimosa: Multi-constraint molecule sampling for molecule optimization.
\newblock In \emph{Proceedings of the AAAI Conference on Artificial Intelligence}, volume~35, pages 125--133, 2021.

\bibitem[Gage(1994)]{gage1994new_bpe}
Philip Gage.
\newblock A new algorithm for data compression.
\newblock \emph{The C Users Journal}, 12\penalty0 (2):\penalty0 23--38, 1994.

\bibitem[Gao et~al.(2023)Gao, Schulman, and Hilton]{gao2023scaling}
Leo Gao, John Schulman, and Jacob Hilton.
\newblock Scaling laws for reward model overoptimization.
\newblock In \emph{International Conference on Machine Learning}, pages 10835--10866. PMLR, 2023.

\bibitem[Gottipati et~al.(2020)Gottipati, Sattarov, Niu, Pathak, Wei, Liu, Blackburn, Thomas, Coley, Tang, et~al.]{gottipati2020learning}
Sai~Krishna Gottipati, Boris Sattarov, Sufeng Niu, Yashaswi Pathak, Haoran Wei, Shengchao Liu, Simon Blackburn, Karam Thomas, Connor Coley, Jian Tang, et~al.
\newblock Learning to navigate the synthetically accessible chemical space using reinforcement learning.
\newblock In \emph{International conference on machine learning}, pages 3668--3679. PMLR, 2020.

\bibitem[Guimaraes et~al.(2017)Guimaraes, Sanchez-Lengeling, Outeiral, Farias, and Aspuru-Guzik]{guimaraes2017objective}
Gabriel~Lima Guimaraes, Benjamin Sanchez-Lengeling, Carlos Outeiral, Pedro Luis~Cunha Farias, and Al{\'a}n Aspuru-Guzik.
\newblock Objective-reinforced generative adversarial networks ({ORGAN}) for sequence generation models.
\newblock \emph{arXiv preprint arXiv:1705.10843}, 2017.

\bibitem[Hadj~Hassine(2022)]{hadj2022covid}
Ikbel Hadj~Hassine.
\newblock Covid-19 vaccines and variants of concern: A review.
\newblock \emph{Reviews in medical virology}, 32\penalty0 (4):\penalty0 e2313, 2022.

\bibitem[Hancock et~al.(2019)Hancock, Bordes, Mazare, and Weston]{hancock2019learning}
Braden Hancock, Antoine Bordes, Pierre-Emmanuel Mazare, and Jason Weston.
\newblock Learning from dialogue after deployment: Feed yourself, chatbot!
\newblock \emph{arXiv preprint arXiv:1901.05415}, 2019.

\bibitem[He et~al.(2021)He, You, Sandstr{\"o}m, Nittinger, Bjerrum, Tyrchan, Czechtizky, and Engkvist]{he2021molecular}
Jiazhen He, Huifang You, Emil Sandstr{\"o}m, Eva Nittinger, Esben~Jannik Bjerrum, Christian Tyrchan, Werngard Czechtizky, and Ola Engkvist.
\newblock Molecular optimization by capturing chemist’s intuition using deep neural networks.
\newblock \emph{Journal of cheminformatics}, 13\penalty0 (1):\penalty0 1--17, 2021.

\bibitem[He et~al.(2022)He, Nittinger, Tyrchan, Czechtizky, Patronov, Bjerrum, and Engkvist]{he2022transformer}
Jiazhen He, Eva Nittinger, Christian Tyrchan, Werngard Czechtizky, Atanas Patronov, Esben~Jannik Bjerrum, and Ola Engkvist.
\newblock Transformer-based molecular optimization beyond matched molecular pairs.
\newblock \emph{Journal of cheminformatics}, 14\penalty0 (1):\penalty0 18, 2022.

\bibitem[Ibarz et~al.(2018)Ibarz, Leike, Pohlen, Irving, Legg, and Amodei]{ibarz2018reward}
Borja Ibarz, Jan Leike, Tobias Pohlen, Geoffrey Irving, Shane Legg, and Dario Amodei.
\newblock Reward learning from human preferences and demonstrations in atari.
\newblock \emph{Advances in neural information processing systems}, 31, 2018.

\bibitem[Jaccard(1912)]{jaccard1912distribution}
Paul Jaccard.
\newblock The distribution of the flora in the alpine zone. 1.
\newblock \emph{New phytologist}, 11\penalty0 (2):\penalty0 37--50, 1912.

\bibitem[Jaques et~al.(2019)Jaques, Ghandeharioun, Shen, Ferguson, Lapedriza, Jones, Gu, and Picard]{jaques2019way}
Natasha Jaques, Asma Ghandeharioun, Judy~Hanwen Shen, Craig Ferguson, Agata Lapedriza, Noah Jones, Shixiang Gu, and Rosalind Picard.
\newblock Way off-policy batch deep reinforcement learning of implicit human preferences in dialog.
\newblock \emph{arXiv preprint arXiv:1907.00456}, 2019.

\bibitem[Jensen(2019)]{jensen2019graph}
Jan~H Jensen.
\newblock A graph-based genetic algorithm and generative model/monte carlo tree search for the exploration of chemical space.
\newblock \emph{Chemical science}, 10\penalty0 (12):\penalty0 3567--3572, 2019.

\bibitem[Jin et~al.(2020)Jin, Barzilay, and Jaakkola]{jin2020multi}
Wengong Jin, Regina Barzilay, and Tommi Jaakkola.
\newblock Multi-objective molecule generation using interpretable substructures.
\newblock In \emph{International conference on machine learning}, pages 4849--4859. PMLR, 2020.

\bibitem[Kelley et~al.(2015)Kelley, Brown, Warren, and Muchmore]{kelley2015posit}
Brian~P Kelley, Scott~P Brown, Gregory~L Warren, and Steven~W Muchmore.
\newblock Posit: flexible shape-guided docking for pose prediction.
\newblock \emph{Journal of Chemical Information and Modeling}, 55\penalty0 (8):\penalty0 1771--1780, 2015.

\bibitem[Lai(2015)]{lai2015giraffe}
Matthew Lai.
\newblock Giraffe: Using deep reinforcement learning to play chess.
\newblock \emph{arXiv preprint arXiv:1509.01549}, 2015.

\bibitem[Landrum et~al.()]{landrum2016rdkit}
Greg Landrum et~al.
\newblock R{D}kit: Open-source cheminformatics software.
\newblock \url{https://www.rdkit.org}. Accessed Oct 2023.

\bibitem[Landrum et~al.(2013)]{landrum2013rdkit}
Greg Landrum et~al.
\newblock Rdkit: A software suite for cheminformatics, computational chemistry, and predictive modeling.
\newblock \emph{Greg Landrum}, 8\penalty0 (31.10):\penalty0 5281, 2013.

\bibitem[Leike et~al.(2018)Leike, Krueger, Everitt, Martic, Maini, and Legg]{leike2018scalable}
Jan Leike, David Krueger, Tom Everitt, Miljan Martic, Vishal Maini, and Shane Legg.
\newblock Scalable agent alignment via reward modeling: A research direction.
\newblock \emph{arXiv preprint arXiv:1811.07871}, 2018.

\bibitem[Liu et~al.(2023{\natexlab{a}})Liu, Jiang, Vasan, Brace, Gokdemir, Brettin, and Xia]{liu2023drugimprover}
Xuefeng Liu, Songhao Jiang, Archit Vasan, Alexander Brace, Ozan Gokdemir, Thomas Brettin, and Fangfang Xia.
\newblock Drugimprover: Utilizing reinforcement learning for multi-objective alignment in drug optimization.
\newblock In \emph{NeurIPS 2023 Workshop on New Frontiers of AI for Drug Discovery and Development}, 2023{\natexlab{a}}.

\bibitem[Liu et~al.(2023{\natexlab{b}})Liu, Yoneda, Stevens, Walter, and Chen]{liu2023blending}
Xuefeng Liu, Takuma Yoneda, Rick~L Stevens, Matthew~R Walter, and Yuxin Chen.
\newblock Blending imitation and reinforcement learning for robust policy improvement.
\newblock \emph{arXiv preprint arXiv:2310.01737}, 2023{\natexlab{b}}.

\bibitem[Liu et~al.(2023{\natexlab{c}})Liu, Yoneda, Wang, Walter, and Chen]{liu2023active}
Xuefeng Liu, Takuma Yoneda, Chaoqi Wang, Matthew~R Walter, and Yuxin Chen.
\newblock Active policy improvement from multiple black-box oracles.
\newblock In \emph{Proceedings of the International Conference on Machine Learning (ICML)}, pages 22320--22337, 2023{\natexlab{c}}.

\bibitem[Liu et~al.(2024)Liu, Tien, Ding, Jiang, and Rick]{liu2024erp}
Xuefeng Liu, Chih-Chan Tien, Peng Ding, Songhao Jiang, and Stevens Rick.
\newblock Entropy-reinforced planning with large language models for de novo drug discovery.
\newblock \emph{ICML}, 2024.

\bibitem[Liu et~al.(2023{\natexlab{d}})Liu, Ye, van Vlijmen, IJzerman, and van Westen]{liu2023drugex}
Xuhan Liu, Kai Ye, Herman~WT van Vlijmen, Adriaan~P IJzerman, and Gerard~JP van Westen.
\newblock Drugex v3: scaffold-constrained drug design with graph transformer-based reinforcement learning.
\newblock \emph{Journal of Cheminformatics}, 15\penalty0 (1):\penalty0 24, 2023{\natexlab{d}}.

\bibitem[Loeffler et~al.(2024)Loeffler, He, Tibo, Janet, Voronov, Mervin, and Engkvist]{loeffler2024reinvent}
Hannes~H Loeffler, Jiazhen He, Alessandro Tibo, Jon~Paul Janet, Alexey Voronov, Lewis~H Mervin, and Ola Engkvist.
\newblock Reinvent 4: Modern ai--driven generative molecule design.
\newblock \emph{Journal of Cheminformatics}, 16\penalty0 (1):\penalty0 20, 2024.

\bibitem[Mansoori et~al.(2017)Mansoori, Mohammadi, Davudian, Shirjang, and Baradaran]{MANS2017}
Behzad Mansoori, Ali Mohammadi, Sadaf Davudian, Solmaz Shirjang, and Behzad Baradaran.
\newblock The different mechanisms of cancer drug resistance: a brief review.
\newblock \emph{Advanced pharmaceutical bulletin}, 7\penalty0 (3):\penalty0 339, 2017.

\bibitem[Mnih et~al.(2013)Mnih, Kavukcuoglu, Silver, Graves, Antonoglou, Wierstra, and Riedmiller]{mnih2013playing}
Volodymyr Mnih, Koray Kavukcuoglu, David Silver, Alex Graves, Ioannis Antonoglou, Daan Wierstra, and Martin Riedmiller.
\newblock Playing atari with deep reinforcement learning.
\newblock \emph{arXiv preprint arXiv:1312.5602}, 2013.

\bibitem[Neil et~al.(2018)Neil, Segler, Guasch, Ahmed, Plumbley, Sellwood, and Brown]{neil2018exploring}
Daniel Neil, Marwin Segler, Laura Guasch, Mohamed Ahmed, Dean Plumbley, Matthew Sellwood, and Nathan Brown.
\newblock Exploring deep recurrent models with reinforcement learning for molecule design.
\newblock In \emph{ICLR}, 2018.

\bibitem[Olivecrona et~al.(2017)Olivecrona, Blaschke, Engkvist, and Chen]{olivecrona2017molecular}
Marcus Olivecrona, Thomas Blaschke, Ola Engkvist, and Hongming Chen.
\newblock Molecular de-novo design through deep reinforcement learning.
\newblock \emph{Journal of cheminformatics}, 9\penalty0 (1):\penalty0 1--14, 2017.

\bibitem[Peng et~al.(2019)Peng, Kumar, Zhang, and Levine]{peng2019advantage}
Xue~Bin Peng, Aviral Kumar, Grace Zhang, and Sergey Levine.
\newblock Advantage-weighted regression: Simple and scalable off-policy reinforcement learning.
\newblock \emph{arXiv preprint arXiv:1910.00177}, 2019.

\bibitem[Polya and Read(2012)]{polya2012combinatorial}
Georg Polya and Ronald~C Read.
\newblock \emph{Combinatorial enumeration of groups, graphs, and chemical compounds}.
\newblock Springer Science \& Business Media, 2012.

\bibitem[Pomerleau(1988)]{pomerleau1988alvinn}
Dean~A Pomerleau.
\newblock Alvinn: An autonomous land vehicle in a neural network.
\newblock \emph{Advances in neural information processing systems}, 1, 1988.

\bibitem[Popova et~al.(2018)Popova, Isayev, and Tropsha]{popova2018deep}
Mariya Popova, Olexandr Isayev, and Alexander Tropsha.
\newblock Deep reinforcement learning for de novo drug design.
\newblock \emph{Science advances}, 4\penalty0 (7):\penalty0 eaap7885, 2018.

\bibitem[Pushpakom et~al.(2019)Pushpakom, Iorio, Eyers, Escott, Hopper, Wells, Doig, Guilliams, Latimer, McNamee, Norris, Sanseau, Cavalla, and Pirmohamed]{pushpakom2019drug}
Sudeep Pushpakom, Francesco Iorio, Patrick~A Eyers, K~Jane Escott, Shirley Hopper, Andrew Wells, Andrew Doig, Tim Guilliams, Joanna Latimer, Christine McNamee, Alan Norris, Philippe Sanseau, David Cavalla, and Munir Pirmohamed.
\newblock Drug repurposing: Progress, challenges and recommendations.
\newblock \emph{Nature Reviews Drug Discovery}, 18\penalty0 (1):\penalty0 41--58, 2019.

\bibitem[Rafailov et~al.(2023)Rafailov, Sharma, Mitchell, Ermon, Manning, and Finn]{rafailov2023direct}
Rafael Rafailov, Archit Sharma, Eric Mitchell, Stefano Ermon, Christopher~D Manning, and Chelsea Finn.
\newblock Direct preference optimization: Your language model is secretly a reward model.
\newblock \emph{arXiv preprint arXiv:2305.18290}, 2023.

\bibitem[Riedmiller et~al.(2018)Riedmiller, Hafner, Lampe, Neunert, Degrave, Wiele, Mnih, Heess, and Springenberg]{riedmiller2018learning}
Martin Riedmiller, Roland Hafner, Thomas Lampe, Michael Neunert, Jonas Degrave, Tom Wiele, Vlad Mnih, Nicolas Heess, and Jost~Tobias Springenberg.
\newblock Learning by playing solving sparse reward tasks from scratch.
\newblock In \emph{International conference on machine learning}, pages 4344--4353. PMLR, 2018.

\bibitem[Rogers and Hahn(2010)]{rogers2010extended}
David Rogers and Mathew Hahn.
\newblock Extended-connectivity fingerprints.
\newblock \emph{Journal of chemical information and modeling}, 50\penalty0 (5):\penalty0 742--754, 2010.

\bibitem[Ross and Bagnell(2014)]{ross2014reinforcement}
Stephane Ross and J~Andrew Bagnell.
\newblock Reinforcement and imitation learning via interactive no-regret learning.
\newblock \emph{arXiv preprint arXiv:1406.5979}, 2014.

\bibitem[Ross et~al.(2011)Ross, Gordon, and Bagnell]{ross2011reduction}
St{\'e}phane Ross, Geoffrey Gordon, and Drew Bagnell.
\newblock A reduction of imitation learning and structured prediction to no-regret online learning.
\newblock In \emph{Proceedings of the fourteenth international conference on artificial intelligence and statistics}, pages 627--635. JMLR Workshop and Conference Proceedings, 2011.

\bibitem[Rothchild et~al.(2021)Rothchild, Tamkin, Yu, Misra, and Gonzalez]{rothchild2021c5t5}
Daniel Rothchild, Alex Tamkin, Julie Yu, Ujval Misra, and Joseph Gonzalez.
\newblock C5{T}5: Controllable generation of organic molecules with transformers.
\newblock \emph{arXiv preprint arXiv:2108.10307}, 2021.

\bibitem[Sennrich et~al.(2015)Sennrich, Haddow, and Birch]{sennrich2015neural_bpe}
Rico Sennrich, Barry Haddow, and Alexandra Birch.
\newblock Neural machine translation of rare words with subword units.
\newblock \emph{arXiv preprint arXiv:1508.07909}, 2015.

\bibitem[Song et~al.(2023)Song, Yu, Li, Yu, Huang, Li, and Wang]{song2023preference}
Feifan Song, Bowen Yu, Minghao Li, Haiyang Yu, Fei Huang, Yongbin Li, and Houfeng Wang.
\newblock Preference ranking optimization for human alignment.
\newblock \emph{arXiv preprint arXiv:2306.17492}, 2023.

\bibitem[St{\aa}hl et~al.(2019)St{\aa}hl, Falkman, Karlsson, Mathiason, and Bostrom]{staahl2019deep}
Niclas St{\aa}hl, Goran Falkman, Alexander Karlsson, Gunnar Mathiason, and Jonas Bostrom.
\newblock Deep reinforcement learning for multiparameter optimization in de novo drug design.
\newblock \emph{Journal of chemical information and modeling}, 59\penalty0 (7):\penalty0 3166--3176, 2019.

\bibitem[Sterling and Irwin(2015)]{sterling2015zinc}
Teague Sterling and John~J Irwin.
\newblock {ZINC15}--ligand discovery for everyone.
\newblock \emph{Journal of Chemical Information and Modeling}, 55\penalty0 (11):\penalty0 2324--2337, 2015.

\bibitem[Stiennon et~al.(2020)Stiennon, Ouyang, Wu, Ziegler, Lowe, Voss, Radford, Amodei, and Christiano]{stiennon2020learning}
Nisan Stiennon, Long Ouyang, Jeffrey Wu, Daniel Ziegler, Ryan Lowe, Chelsea Voss, Alec Radford, Dario Amodei, and Paul~F Christiano.
\newblock Learning to summarize with human feedback.
\newblock \emph{Advances in Neural Information Processing Systems}, 33:\penalty0 3008--3021, 2020.

\bibitem[Sun et~al.(2022)Sun, Xing, Meng, Wang, Chen, and Zhou]{sun2022molsearch}
Mengying Sun, Jing Xing, Han Meng, Huijun Wang, Bin Chen, and Jiayu Zhou.
\newblock Molsearch: search-based multi-objective molecular generation and property optimization.
\newblock In \emph{Proceedings of the 28th ACM SIGKDD conference on knowledge discovery and data mining}, pages 4724--4732, 2022.

\bibitem[Sutton et~al.(1999)Sutton, McAllester, Singh, and Mansour]{sutton1999policy}
Richard~S Sutton, David McAllester, Satinder Singh, and Yishay Mansour.
\newblock Policy gradient methods for reinforcement learning with function approximation.
\newblock \emph{Advances in neural information processing systems}, 12, 1999.

\bibitem[Tan et~al.(2022{\natexlab{a}})Tan, Liu, and Xie]{tan2022reinforcement}
Ryan~K Tan, Yang Liu, and Lei Xie.
\newblock Reinforcement learning for systems pharmacology-oriented and personalized drug design.
\newblock \emph{Expert Opinion on Drug Discovery}, 17\penalty0 (8):\penalty0 849--863, 2022{\natexlab{a}}.

\bibitem[Tan et~al.(2022{\natexlab{b}})Tan, Dai, Huang, Guo, Zheng, Lei, Chen, and Yang]{tan2022drlinker}
Youhai Tan, Lingxue Dai, Weifeng Huang, Yinfeng Guo, Shuangjia Zheng, Jinping Lei, Hongming Chen, and Yuedong Yang.
\newblock Drlinker: Deep reinforcement learning for optimization in fragment linking design.
\newblock \emph{Journal of Chemical Information and Modeling}, 62\penalty0 (23):\penalty0 5907--5917, 2022{\natexlab{b}}.

\bibitem[Tassa et~al.(2018)Tassa, Doron, Muldal, Erez, Li, Casas, Budden, Abdolmaleki, Merel, Lefrancq, Lillicrap, and Riedmiller]{tassa2018deepmind}
Yuval Tassa, Yotam Doron, Alistair Muldal, Tom Erez, Yazhe Li, Diego de~Las Casas, David Budden, Abbas Abdolmaleki, Josh Merel, Andrew Lefrancq, Timothy Lillicrap, and Martin Riedmiller.
\newblock {DeepMind} control suite.
\newblock \emph{arXiv preprint arXiv:1801.00690}, 2018.

\bibitem[Touvron et~al.(2023)Touvron, Martin, Stone, Albert, Almahairi, Babaei, Bashlykov, Batra, Bhargava, Bhosale, et~al.]{touvron2023llama}
Hugo Touvron, Louis Martin, Kevin Stone, Peter Albert, Amjad Almahairi, Yasmine Babaei, Nikolay Bashlykov, Soumya Batra, Prajjwal Bhargava, Shruti Bhosale, et~al.
\newblock Llama 2: Open foundation and fine-tuned chat models.
\newblock \emph{arXiv preprint arXiv:2307.09288}, 2023.

\bibitem[Tyrchan and Evertsson(2017)]{tyrchan2017matched}
Christian Tyrchan and Emma Evertsson.
\newblock Matched molecular pair analysis in short: algorithms, applications and limitations.
\newblock \emph{Computational and structural biotechnology journal}, 15:\penalty0 86--90, 2017.

\bibitem[Vasan et~al.(2023)Vasan, Stevens, Ramanathan, and Venkatram]{vasan23}
Archit Vasan, Rick Stevens, Arvind Ramanathan, and Vishwanath Venkatram.
\newblock Benchmarking language-based docking models.
\newblock 2023.

\bibitem[Vaswani et~al.(2017)Vaswani, Shazeer, Parmar, Uszkoreit, Jones, Gomez, Kaiser, and Polosukhin]{vaswani2017attention}
Ashish Vaswani, Noam Shazeer, Niki Parmar, Jakob Uszkoreit, Llion Jones, Aidan~N Gomez, {\L}ukasz Kaiser, and Illia Polosukhin.
\newblock Attention is all you need.
\newblock \emph{Advances in neural information processing systems}, 30, 2017.

\bibitem[Vecerik et~al.(2017)Vecerik, Hester, Scholz, Wang, Pietquin, Piot, Heess, Roth{\"o}rl, Lampe, and Riedmiller]{vecerik2017leveraging}
Mel Vecerik, Todd Hester, Jonathan Scholz, Fumin Wang, Olivier Pietquin, Bilal Piot, Nicolas Heess, Thomas Roth{\"o}rl, Thomas Lampe, and Martin Riedmiller.
\newblock Leveraging demonstrations for deep reinforcement learning on robotics problems with sparse rewards.
\newblock \emph{arXiv preprint arXiv:1707.08817}, 2017.

\bibitem[Wang et~al.(2022)Wang, Chen, Zhang, Li, Lin, Pan, Wu, and Zhang]{wang2022reinforcement}
Chenran Wang, Yang Chen, Yuan Zhang, Keqiao Li, Menghan Lin, Feng Pan, Wei Wu, and Jinfeng Zhang.
\newblock A reinforcement learning approach for protein--ligand binding pose prediction.
\newblock \emph{BMC bioinformatics}, 23\penalty0 (1):\penalty0 1--18, 2022.

\bibitem[Weininger(1988)]{weininger1988smiles}
David Weininger.
\newblock Smiles, a chemical language and information system. 1. introduction to methodology and encoding rules.
\newblock \emph{Journal of chemical information and computer sciences}, 28\penalty0 (1):\penalty0 31--36, 1988.

\bibitem[Williams(1992)]{williams1992simple}
Ronald~J Williams.
\newblock Simple statistical gradient-following algorithms for connectionist reinforcement learning.
\newblock \emph{Machine learning}, 8\penalty0 (3):\penalty0 229--256, 1992.

\bibitem[Wu et~al.(2021)Wu, Ouyang, Ziegler, Stiennon, Lowe, Leike, and Christiano]{wu2021recursively}
Jeff Wu, Long Ouyang, Daniel~M Ziegler, Nisan Stiennon, Ryan Lowe, Jan Leike, and Paul Christiano.
\newblock Recursively summarizing books with human feedback.
\newblock \emph{arXiv preprint arXiv:2109.10862}, 2021.

\bibitem[Wu et~al.(2022)Wu, Liu, Wang, Yan, and Yang]{wu2022rlcg}
Shenghao Wu, Tianyi Liu, Zhirui Wang, Wen Yan, and Yingxiang Yang.
\newblock Rlcg: When reinforcement learning meets coarse graining.
\newblock In \emph{NeurIPS 2022 AI for Science: Progress and Promises}, 2022.

\bibitem[Yi et~al.(2019)Yi, Goel, Khatri, Cervone, Chung, Hedayatnia, Venkatesh, Gabriel, and Hakkani-Tur]{yi2019towards}
Sanghyun Yi, Rahul Goel, Chandra Khatri, Alessandra Cervone, Tagyoung Chung, Behnam Hedayatnia, Anu Venkatesh, Raefer Gabriel, and Dilek Hakkani-Tur.
\newblock Towards coherent and engaging spoken dialog response generation using automatic conversation evaluators.
\newblock \emph{arXiv preprint arXiv:1904.13015}, 2019.

\bibitem[Yoshikawa et~al.(2018)Yoshikawa, Terayama, Sumita, Homma, Oono, and Tsuda]{yoshikawa2018population}
Naruki Yoshikawa, Kei Terayama, Masato Sumita, Teruki Homma, Kenta Oono, and Koji Tsuda.
\newblock Population-based de novo molecule generation, using grammatical evolution.
\newblock \emph{Chemistry Letters}, 47\penalty0 (11):\penalty0 1431--1434, 2018.

\bibitem[You et~al.(2018)You, Liu, Ying, Pande, and Leskovec]{you2018graph}
Jiaxuan You, Bowen Liu, Zhitao Ying, Vijay Pande, and Jure Leskovec.
\newblock Graph convolutional policy network for goal-directed molecular graph generation.
\newblock \emph{Advances in neural information processing systems}, 31, 2018.

\bibitem[Yu et~al.(2020)Yu, Quillen, He, Julian, Hausman, Finn, and Levine]{yu2020meta}
Tianhe Yu, Deirdre Quillen, Zhanpeng He, Ryan Julian, Karol Hausman, Chelsea Finn, and Sergey Levine.
\newblock Meta-world: A benchmark and evaluation for multi-task and meta reinforcement learning.
\newblock In \emph{Conference on robot learning}, pages 1094--1100. PMLR, 2020.

\bibitem[Yuan et~al.(2023)Yuan, Yuan, Tan, Wang, Huang, and Huang]{yuan2023rrhf}
Zheng Yuan, Hongyi Yuan, Chuanqi Tan, Wei Wang, Songfang Huang, and Fei Huang.
\newblock Rrhf: Rank responses to align language models with human feedback without tears.
\newblock \emph{arXiv preprint arXiv:2304.05302}, 2023.

\bibitem[Yuki et~al.(2020)Yuki, Fujiogi, and Koutsogiannaki]{yuki2020covid}
Koichi Yuki, Miho Fujiogi, and Sophia Koutsogiannaki.
\newblock Covid-19 pathophysiology: A review.
\newblock \emph{Clinical immunology}, 215:\penalty0 108427, 2020.

\bibitem[Zhang et~al.(2023)Zhang, Li, Xing, Yuan, He, and Sun]{zhang2023universal}
Yunjiang Zhang, Shuyuan Li, Miaojuan Xing, Qing Yuan, Hong He, and Shaorui Sun.
\newblock Universal approach to de novo drug design for target proteins using deep reinforcement learning.
\newblock \emph{ACS omega}, 8\penalty0 (6):\penalty0 5464--5474, 2023.

\bibitem[Zhou et~al.(2019)Zhou, Kearnes, Li, Zare, and Riley]{zhou2019optimization}
Zhenpeng Zhou, Steven Kearnes, Li~Li, Richard~N Zare, and Patrick Riley.
\newblock Optimization of molecules via deep reinforcement learning.
\newblock \emph{Scientific reports}, 9\penalty0 (1):\penalty0 10752, 2019.

\bibitem[Ziegler et~al.(2019)Ziegler, Stiennon, Wu, Brown, Radford, Amodei, Christiano, and Irving]{ziegler2019fine}
Daniel~M Ziegler, Nisan Stiennon, Jeffrey Wu, Tom~B Brown, Alec Radford, Dario Amodei, Paul Christiano, and Geoffrey Irving.
\newblock Fine-tuning language models from human preferences.
\newblock \emph{arXiv preprint arXiv:1909.08593}, 2019.

\end{thebibliography}

\appendix

\onecolumn

\section{Appendix}

{
\subsection{Pre-training and fine-tuning dataset}\label{app:pretrain_data} 

We utilized the ZINC dataset, filtering for Standard, In-Stock, and Drug-Like molecules, resulting in approximately 11 million molecules.
The new pre-training dataset is constructed by randomly selecting two molecules from the ZINC dataset that meet the proposed criteria (as described in Equation \ref{eq:corpus}). This new pre-training dataset comprises 10 million molecules, with a 90/10 training/validation split.

\fix{For fine-tuning, we employ one million compounds from the ZINC15 dataset, docked to the 3CLPro protein (PDB ID: 7BQY), which is linked to SARS-CoV-2, and the RTCB protein (PDB ID: 4DWQ), associated with human cancer. These data are obtained from the latest Cancer and COVID dataset by \citet{liu2023drugimprover} and are used across all baselines.
}
}

{
\subsection{Generation with finetuned model} \label{app:generation} 
The epoch with highest historical average normalized reward (as detailed in Section \ref{experiments}) is selected for generation. 
With this epoch and corresponding weights, we apply TOPPK\citep{liu2024erp} for generation. 
}

{
\subsection{Baseline REINVENT 4} \label{app:reinvent}
Following are detailed description of six different kinds of property change $Z$ included in REINVENT \citet{he2022transformer, he2021molecular}
\begin{itemize}
    \item {MMP:} There are user-defined desirable property changes between molecules $X$ and $Y$.
    \item {Similarity $\geq 0.5$:} The Tanimoto similarity between molecules $X$ and $Y$ exceeds 0.5.
    \item {Similarity $\in [0.5, 0.7)$:} The Tanimoto similarity for the pair $\left(X, Y\right)$ lies between 0.5 and 0.7.
    \item {Similarity $\geq 0.7$:} The Tanimoto similarity between molecules $X$ and $Y$ exceeds 0.7.
    \item {Scaffold:} Molecules $X$ and $Y$ share the same scaffold.
    \item {Scaffold generic:} Molecules $X$ and $Y$ share the same generic scaffold.
\end{itemize}}

\subsection{Binding sites of 3clpro and RTCB}
\begin{figure*}[ht!]

    \begin{subfigure}{.5\textwidth}
        \centering
        \includegraphics[%
        width=6cm,  clip={0,0,0,0}]{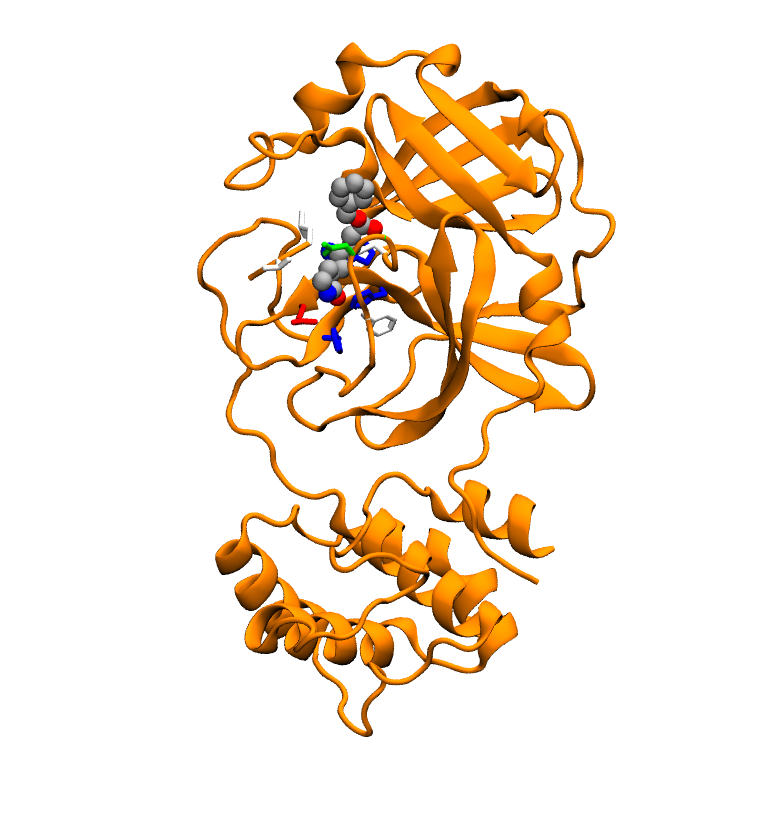}
        \caption{3CLPro.}\label{fig:exp:combine_expertise}
    \end{subfigure}\hfil
    \begin{subfigure}{.5\textwidth}
        \centering
        \includegraphics[%
        width=6cm,  clip={0,0,0,0}]{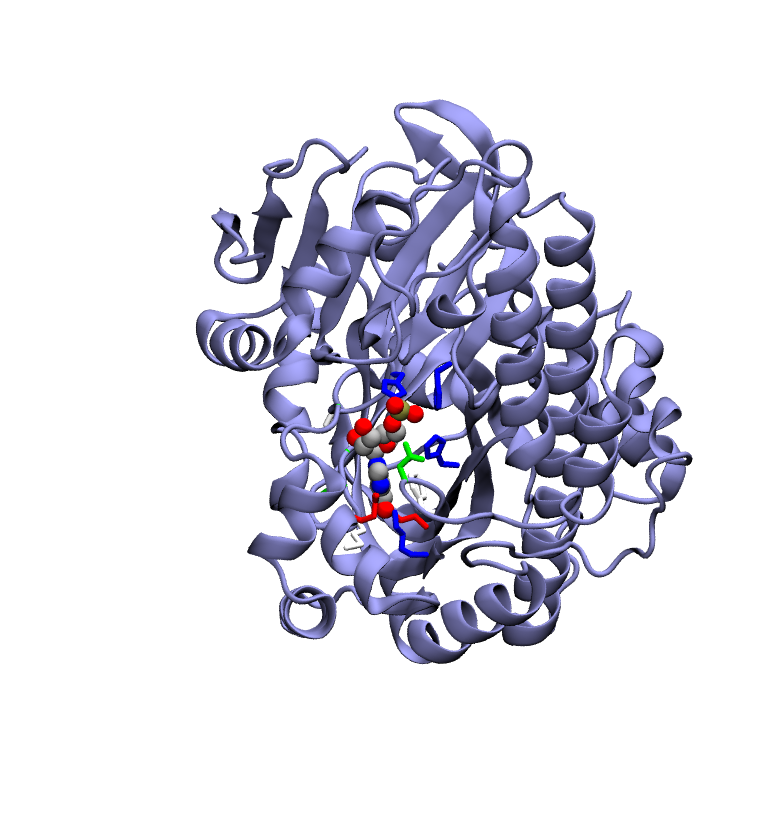}
        \caption{RTCB.}\label{fig:exp:sample}
    \end{subfigure}\hfil
    \caption{The binding sites of proteins 3CLPro (PDB ID: 7BQY)~(\textbf{Left}) and RTCB (PDB ID: 4DWQ)~(\textbf{Right}). {Open Eye software are used to identify atoms around the crystallized compound as binding sites.}     
    }
\end{figure*}

\subsection{Surrogate model}\label{app:surrogate_model}
{The surrogate model~\citep{vasan23} is a simplified variant of a BERT-like transformer architecture, commonly utilized in natural language processing tasks. Within this model, tokenized SMILES strings are initially inputted and subsequently undergo positional embedding. The outputs are then fed into a series of five transformer blocks, each comprising a multi-head attention layer (with 21 heads), a dropout layer, layer normalization with residual connection, and a feedforward network. This feedforward network is composed of two dense layers followed by dropout and layer normalization with residual connection. Following the stack of transformer blocks, a final feedforward network is employed to generate the predicted docking score.
\fix{ The validation $r^2$ values are 0.842 for 3CLPro and 0.73 for the RTCB dataset.}

}

\subsection{Computing infrastructure {and wall-time comparison}}\label{app:computing_infrastructure}

{We trained our docking surrogate models using 4 nodes of a supercomputer, where each node contains 64 CPU cores and 4 A100 GPUs. The training time for each model was approximately 3 hours.
We conducted other experiments on a cluster that includes CPU nodes with approximately 280 cores and GPU nodes with approximately 110 Nvidia GPUs, ranging from Titan X to A6000, mostly set up in 4- and 8-GPU configurations. 
Pretraining utilizes 8 GPUs, while \algname uses a single GPU. Both processes employ either V100 or A100 GPUs. Based on the computing infrastructure, we obtained the wall-time comparison in \tabref{table:wall-time} as follows.
}

 \begin{table*}[ht!]
 {
    \centering
    {\scriptsize
    \scalebox{1}{
    \begin{tabular}{l c c  }
        \toprule
        \textbf{Methods}
        & {\makecell[c]{Total Run Time}}
        \\
        \midrule
        \textbf{\makecell[l]{Pretrain}}
        &  \makecell[r]{24h}
        \\
        \textbf{\makecell[l]{\algname}}
        &  \makecell[r]{8h}
        \\
        \bottomrule
    \end{tabular}}}
    \caption{{Wall-time comparison between different methods.} }
        \label{table:wall-time}   
        }
\end{table*}

\subsection{Hyperparameters and architectures}\label{app:hyperparameters}
Table \ref{app:tab:hyperparams} provides a list of hyperparameter settings we used for our experiments.
A selection of 1280 molecules from each of the RTCB and 3CLPro datasets, with docking scores ranging from -14 to -6, is used for \algname finetuning and experimentation. This range is based on \citep{liu2024erp}.
{Furthermore, when calculating the average normalized reward for the original molecule, where similarity is not considered, we assign a weight of $[0.25] \times 4$ to docking, druglikeliness, synthesizability, and solubility.}
{Moreover, when the generated SMILES is invalid, meaning that calculating the reward $R_c$ is not possible, we have two options: the first is to directly subtract the reward of the original SMILES (i.e., $-R_c(X)$), or alternatively, we can consider the advantage preference as zero.}

\begin{table*}[h!]
    {
    \centering
    {\scriptsize
    \scalebox{1}{
    \begin{tabular}{c c }
        \toprule
        \textbf{Parameter} &  \textbf{Value} 
        \\
        \midrule
        {\makecell[l]{Pretraining}}
        \\
        \midrule
        {\makecell[l]{\quad Learning rate}} &  \makecell[c]{$5 \times e^{-5}$}
        \\
        \midrule
        {\makecell[l]{\quad Batch size}} &  \makecell[c]{$24$}
        \\
        \midrule
        {\makecell[l]{\quad Optimizer}} &  \makecell[c]{Adam}
        \\
        \midrule
        {\makecell[l]{\quad \# of Epochs}} &  \makecell[c]{$10$} \\
        \midrule
        {\makecell[l]{\quad Model \# of Params}} &  \makecell[c]{$124M$} \\

        \bottomrule
    \end{tabular}}}
        \caption{{{Hyperparameters for pretraining}}. }
        \label{app:tab:hyperparams}
        }
\end{table*}

\begin{table*}[h!]
    {
    \centering
    {\scriptsize
    \scalebox{1}{
    \begin{tabular}{c c }
        \toprule
        \textbf{Parameter} &  \textbf{Value} 
        \\
        \midrule
        {\makecell[l]{Shared}}
        \\
        \midrule
        {\makecell[l]{\quad \# of Molecules Optimized}} &  \makecell[c]{$256$}
        \\
        \midrule
        {\makecell[l]{\quad Learning Rate}} &  \makecell[c]{$1 \times 10^{-5}$}
        \\
        \midrule
        {\makecell[l]{\quad Optimizer}} &  \makecell[c]{Adam}
        \\
        \midrule
        {\makecell[l]{\quad \# of Epochs for Training}} &  \makecell[c]{$100$}
        \\
        \midrule
        {\makecell[l]{\quad Batch size}} &  \makecell[c]{$64$}
        \\
        \midrule
        {\makecell[l]{\quad Best-of-N}} &  \makecell[c]{$[4,6,8]$}
        \\
        \midrule
        {\makecell[l]{\quad TopK}} &  \makecell[c]{$[10,15,20]$}
        \\
        \midrule
        {\makecell[l]{\quad TopP}} &  \makecell[c]{$[0.85,0.9,0.95]$}
        \\
        \midrule
        {\makecell[l]{\algname Objective Weight}}
        \\
        \midrule
        {\makecell[l]{\quad Tamimoto Similarity}} &  \makecell[c]{$[0.2,0.4,0.6,0.8]$}
        \\
        \midrule
        {\makecell[l]{\quad Other Four Objectives}} &  \makecell[c]{$(1-W(Sim)) / 4$}
        \\
        \midrule
        {\makecell[l]{\algname Other}}
        \\
        \midrule
        {\makecell[l]{\quad Fingerprint Size}} &  \makecell[c]{$1024$}
        \\
        \midrule
        {\makecell[l]{\quad Normalize Min/Max}} &  \makecell[c]{$[-10, 10]$}
        \\
        \midrule
        {\makecell[l]{Advantage preference with \\ invalid generated SMILES}}
        \\
        \midrule
        {\makecell[l]{\quad 3CLPro}} &  \makecell[c]{$[0,-R_c(X)]$}
        \\
        \midrule
        {\makecell[l]{\quad RTCB}} &  \makecell[c]{$[0,-R_c(X)]$}
        \\

        \bottomrule
    \end{tabular}}}
        \caption{{{Hyperparameters for \algname}}. }
        \label{app:tab:hyperparams}
        }
\end{table*}

{
\subsection{Ablation study of novelty and diversity}\label{spo_validity}

We further explore the novelty and diversity of SPO, both with and without partial molecule enhancements. Novelty is assessed by verifying if the generated molecule/SMILES is present in the original dataset, assigning a value of 0 if it exists and 1 if it does not. Diversity is measured by examining if the generated molecule/SMILES is repeated within the generated dataset, indicating a duplication if two distinct prompts or molecules yield the same outcome. The findings demonstrate that the molecules created through our method are completely new in comparison to the original molecules. Moreover, our technique has attained a decent level of diversity.

\begin{table}[h!]
\centering
\begin{tabular}{|l|l|c|c|}
\hline
\textbf{Data} & \textbf{Method}        & \textbf{Novelty} & \textbf{Diversity} \\ \hline
3CLPro        & SPO w/o partial        & 1               & 0.98               \\ \cline{2-4} 
              & SPO                    & 1               & 0.95               \\ \hline
RTCB          & SPO w/o partial        & 1               & 0.98               \\ \cline{2-4} 
              & SPO                    & 1               & 0.69               \\ \hline
\end{tabular}
\caption{Comparison of SPO with and without partial molecule improvements across cancer and covid datasets in terms of novelty and diversity.}
\label{tab:validity_comparison}
\end{table}
}

\fix{
\subsection{Training corpus visualization}\label{corpus_viz}
For better understanding the training corpus, \figref{fig:corpus_fiz} shows an example and corresponding visualization towards training corpus described in \eqref{eq:corpus}. The two selected molecules/SMILES have the similarity of 0.52.
}
\begin{figure}[ht!]
    \centering
    \includegraphics[width=0.5\linewidth]{./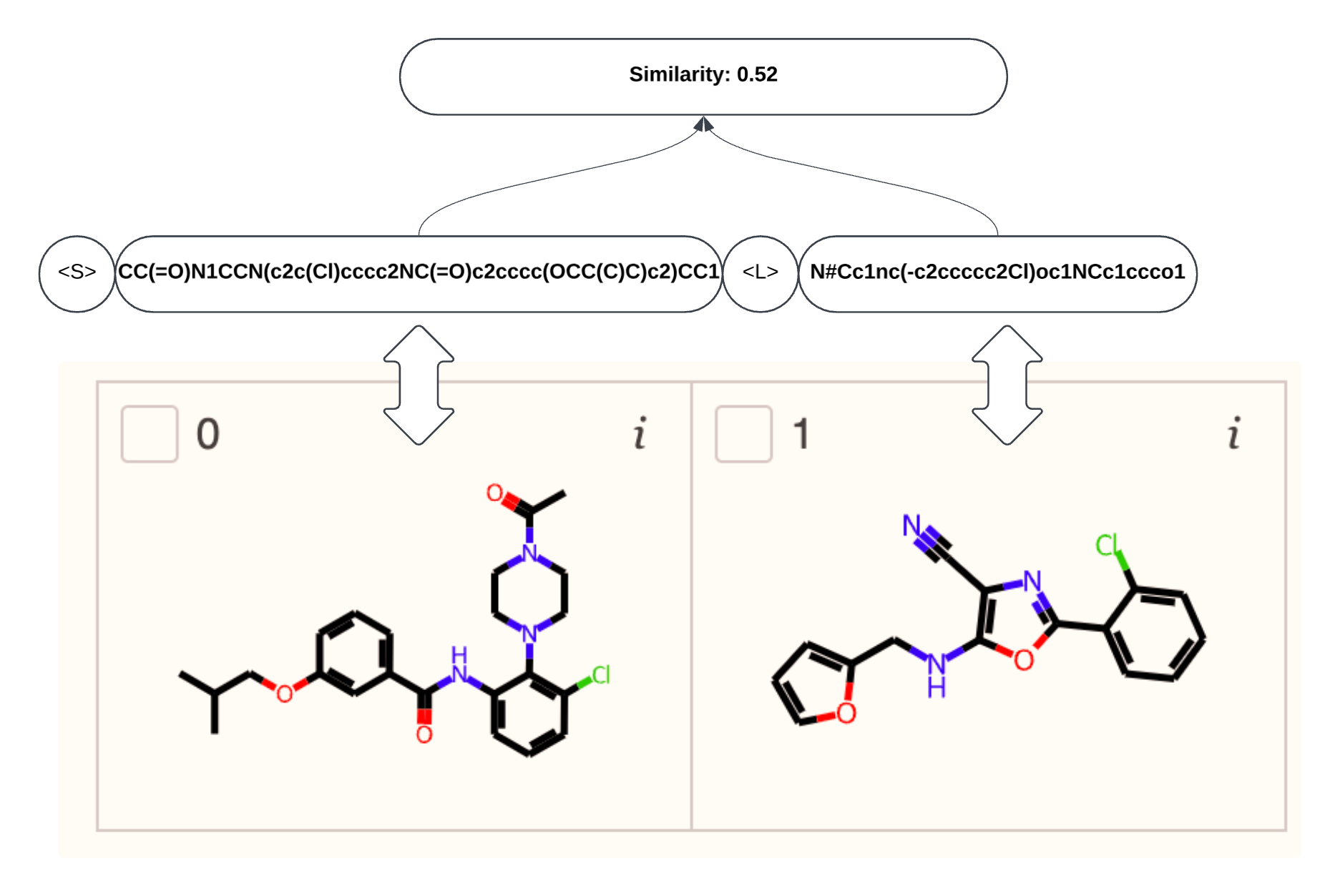}
    \caption{\fix{Training corpus example and visualization}}
    \label{fig:corpus_fiz}
\end{figure}

\fix{
\subsection{Baselines with RL fine-tuning}

The initial version of Reinvent4 \XL{@songhao check version number}\citep{he2021molecular, he2022transformer} only introduced pre-trained models, and the later updated version of Reinvent4 \citep{loeffler2024reinvent}, which includes Mol2Mol~\citep{he2021molecular, he2022transformer} as one of four models, stated that Reinvent could perform RL fine-tuning through REINFORCE~\citep{williams1992simple} algorithm without providing empirical results. In this work, we conducted experiments for both Reinvent with pre-training only and Reinvent with RL fine-tuning. For Mol2Mol~\citep{he2021molecular, he2022transformer} with pre-training only, we followed different pre-trained rules outlined in their paper to pretrained the models and used them as various baseline models; meanwhile, we used the same pre-trained ZINC dataset as in our approach. Molsearch and Mimosa, on the other hand, focuses more on optimizting sampling process. Molsearch uses Monte Carlo tree search (MCTS) to optimize molecular properties; MIMOSA \citep{fu2021mimosa} designed a Markov Chain Monte Carlo (MCMC) based molecule sampling method that enables efficient sampling from a target distribution. The results are provided in  \tabref{exp:main_result}; DrugEx v3 \citep{liu2023drugex} employs graph transformers with scaffold constraints to refine molecular structures, leveraging reinforcement learning to enhance the desired molecular properties. In addition, we also conducted experiments for Reinvent 4~\citep{loeffler2024reinvent} with additional RL fine-tuning, using the same offline dataset we proposed in the paper, the same scoring function, and the same number of training epochs as our approach. In the cancer dataset, our proposed method outperforms all variants of Reinvent 4~\citep{loeffler2024reinvent} with RL finetuning. And in the COVID-19 dataset, our proposed method still outperforms almost all variants. Therefore, our method surpasses the performance of both the pre-trained-only Mol2Mol~\citep{he2021molecular, he2022transformer} and the version~\citep{loeffler2024reinvent} that underwent REINFORCE fine-tuning.

\fwname also surpasses REINVENT4 with RL fine-tuning. This is because REINVENT4 employs the REINFORCE, a conventional RL approach, which does not account for improvements over the original molecule. In contrast, our proposed SPO algorithm is specifically designed for drug optimization toward original given molecule. It incorporates the concept of advantage preference and partial molecule components to optimize target molecules more effectively.

}

\fix{
\section{Dataset details}\label{app:dataset}
\fix{For each dataset proposed, it is a orderable subset of the ZINC15 dataset. Creating these subsets was mainly a manual process, involving the identification of compounds that are either in stock or can be shipped within three weeks from various suppliers. Subsequently, we performed a random sampling to select 1 million compounds.

For proteins with available structures containing bound ligands, we used X-ray crystallographic data to locate ligand density regions and defined the pocket as a rectangular box enclosing that area. For proteins without bound ligands, we employed FPocket to identify the top-ranked pocket and similarly defined the pocket with a rectangular box around that region. And therefore for each dataset we proposed, only one pocket is used for docking. 
The validation $r^2$ values are 0.842 for 3CLPro and 0.73 for the RTCB dataset (two datasets used in section \ref{experiments}). 

}

The datasets created in this work including the following files:

\begin{itemize}
    \item {ST$\_$MODEL:}  The trained surrogate model for SARS-CoV-2 proteins.
    \item {ST$\_$MODEL$\_$rtcb:} The trained surrogate model for RTCB Human-Ligase cancer target.
    
    \item {24 *.csv files for SARS-CoV-2 proteins under folder data/COVIDRec:} The training and validation SMILES string data docked on SARS-CoV-2 receptor including 3CLPro$\_$7BQY$\_$A$\_$1$\_$F, NPRBD$\_$6VYO$\_$AB$\_$1$\_$F, NPRBD$\_$6VYO$\_$A$\_$1$\_$F, NPRBD$\_$6VYO$\_$BC$\_$1$\_$F, NPRBD$\_$6VYO$\_$CD$\_$1$\_$F, NPRBD$\_$6VYO$\_$DA$\_$1$\_$F, NSP10-16$\_$6W61$\_$AB$\_$1$\_$F, NSP10-16$\_$6W61$\_$AB$\_$2$\_$F, NSP10$\_$6W61$\_$B$\_$1$\_$F, NSP15$\_$6VWW$\_$AB$\_$1$\_$F, NSP15$\_$6VWW$\_$A$\_$1$\_$F, NSP15$\_$6VWW$\_$A$\_$2$\_$F, NSP15$\_$6W01$\_$AB$\_$1$\_$F, NSP15$\_$6W01$\_$A$\_$1$\_$F, NSP15$\_$6W01$\_$A$\_$2$\_$F, NSP15$\_$6W01$\_$A$\_$3$\_$H, NSP16$\_$6W61$\_$A$\_$1$\_$H, Nsp13.helicase$\_$m1$\_$pocket2, Nsp13.helicase$\_$m3$\_$pocket2, PLPro$\_$6W9C$\_$A$\_$2$\_$F, RDRP$\_$6M71$\_$A$\_$2$\_$F, RDRP$\_$6M71$\_$A$\_$3$\_$F, RDRP$\_$6M71$\_$A$\_$4$\_$F, RDRP$\_$7BV1$\_$A$\_$1$\_$F.
    
    \item {5 *.csv files for human cancer proteins under folder data/CancerRep:} The training and validation SMILES string data docked on human cancer proteins including 6T2W, NSUN2, RTCB, WHSC, WRN.
    \item {Each folder in data/COVIDRec and data/CancerRep includes:
model.weights.h5: model weights
SMILES*.csv: 1 million SMILES their docking scores.
We also provide code within data/SurrogateInf on how to use the surrogate model for inference.}

    \item {3CLPro$\_$7BQY$\_$A$\_$1$\_$F.oeb:} The 3CLPro OpenEye receptor file.
    \item {rtcb-7p3b-receptor-5GP-A-DU-601.oedu:}  The RTCB OpenEye receptor file.
    \item {  We include an extended dataset of 1 million SMILES strings from the ZINC15 dataset, their docking scores (as determined by OpenEye FRED) to 24 COVID and 5 cancer-target receptors and surrogate model weights for each corresponding receptor.}
    \item We provide code within data/SurrogateInf on how to use the surrogate model for inference.
\end{itemize}}

\section{Proofs of the theoretical results}\label{app:theory_proof}
\begin{proof}[Proof of Lemma \ref{lem:equivalence}]\label{app:lem:equivalence}
    For simplicity, let us take out the shift terms $R_c(X)$ in $r^\AP$ and $R_c(X_{1:T})$ in $r^\AP_{\BON(j)}$ for a while when defining $J$. 
    Since the shift term $R_c(X)$ (or its BON counterpart) are  independent of the current policy $\pi$, such an operation does not influence the definition of optimal policy for $J$.
    One can always split $J=\frac 1 2 J_{\BON} + \frac 1 2 J_0$, where 
    $J_{\BON} = \E_\pi \E_{j\in \mathcal{U}([T])} r_{\BON(j)}^{\AP}(Y_{1:T}, X)$. 
    Take $\pi^\star$ to be an optimizer of $J_0$. By definition, 
    \begin{align*}
        J_0(\pi^\star) \ge J(\pi) \ge J_0(\pi)
    \end{align*}
    as BON cannot be worse than the current molecule. 
    Thus, any policy that maximizes $J_0$ should also be a maximizer to $J$. 
    On the other hand, notice that 
    \begin{align}
        J_0(\pi^\star) - J(\pi) \ge \frac 1 2 \cdot (J_0(\pi^\star) - J_0(\pi)) \ge 0
    \end{align}
    since the BON reward should not exceed the optimal reward.
    Hence, any policy that maximize $J$ should also maximize $J_0$ since the optimizer of $J$ gives optimal value equal to $J_0(\pi^\star)$. 
    Hence, we prove the claim.
\end{proof}

\begin{proof}[Proof of Lemma \ref{lem:gradient}]\label{app:lem:gradient}
Denote by $\pi_\theta$ the policy parameterized by $\theta$.
When doing policy optimization, we note that 
\begin{align}
    g &= \E_{X\sim\rho_0}^\pi  \left[\nabla_\theta \log \pi_\theta(Y_{1:T}\given X) R^\AP (Y_{1:T}, X) \right] \notag\\
   &= \E_{X\sim\rho_0}^\pi \left[\nabla_\theta \log \pi_\theta(Y_{1:T}\given X) \left(\frac 1 2 r^\AP(Y_{1:T}, X) + \frac 1 2 \E_{j\in \mathcal{U}([T])} r_{\BON(j)}^{\AP}(Y_{1:T}, X)\right)\right] \notag\\
   &= \frac {1} {2T} \sum_{t=1}^T \E_{X\sim\rho_0}^\pi \left[\left(\nabla_\theta \log \pi_\theta(Y_{1:t}\given X) + \nabla_\theta \log \pi_\theta(Y_{t+1:T}\given Y_{1:t}, X) \right) r_{\BON(t)}^{\AP}(Y_{1:T}, X)\right] \notag\\
   &\qquad +  \frac 1 2 \cdot \E_{X\sim\rho_0}^\pi \left[\nabla_\theta \log \pi_\theta(Y_{1:T}\given X)  r^\AP(Y_{1:T}, X) \right] \notag\\
   &= \frac {1} {2T} \sum_{t=1}^T \E_{X\sim\rho_0}^\pi \left[\nabla_\theta \log \pi_\theta(Y_{1:t}\given X)  r_{\BON(t)}^{\AP}(Y_{1:T}, X)\right] \notag\\
   &\qquad +  \frac 1 2 \cdot \E_{X\sim\rho_0}^\pi \left[\nabla_\theta \log \pi_\theta(Y_{1:T}\given X)  r^\AP(Y_{1:T}, X) \right], 
\end{align}
where the last equality holds by noting that 
$$\E^\pi[\nabla_\theta \log \pi_\theta(Y_{t+1:T}\given Y_{1:t}, X) \given Y_{1:t}, X]=0.$$
\end{proof}

\clearpage

\end{document}